%% file: neurips_alt_2026.tex
\documentclass{article}

\PassOptionsToPackage{numbers, compress}{natbib}
\usepackage[preprint]{neurips_2026}

\usepackage[utf8]{inputenc}
\usepackage[T1]{fontenc}
\usepackage{hyperref}
\usepackage{url}
\usepackage{booktabs}
\usepackage{graphicx}
\usepackage{amsfonts}
\usepackage{amsmath}
\usepackage{amssymb}
\usepackage{nicefrac}
\usepackage{microtype}
\usepackage{xcolor}
\usepackage{subcaption}
\usepackage{caption}
\usepackage{wrapfig}
\usepackage{tikz}
\usepackage{amsthm}
\newtheorem{lemma}{Lemma}
\newtheorem{theorem}{Theorem}
\usetikzlibrary{positioning, arrows.meta, calc}

\title{Neural Phase Correlation}

\author{%
  Cole Reynolds \\
  Weyl Labs \\ 
  \texttt{cole@weyllabs.com}
}

\begin{document}

\maketitle

\begin{abstract}

Correspondence is fundamentally relational: it seeks the unknown transformation between two observations of a common scene, not the content of either. Yet the dominant learning-based methods do not represent the transformation as a first-class object in the architecture. They encode each image independently and let a learned similarity function or a deep decoder discover the mapping implicitly. Phase correlation is the canonical exception, measuring the inter-image relationship directly in the Fourier domain, but the rigidity of its fixed basis confines it to global translation.

We introduce a learned generalization of phase correlation that lifts this restriction by learning the basis on which the transformation decomposes. The same algebraic primitive extends to dense non-rigid deformations and to unitary dynamics. On the ACDC cardiac-MRI benchmark the framework matches or exceeds prior published baselines on both registration directions. On CAMUS echocardiography it matches state-of-the-art without auxiliary scoring or adaptive-smoothness mechanisms. Applied to time-evolved wavefunction pairs of the 1-D quantum harmonic oscillator, the same framework recovers the Hermite-function eigenstates and the quantized energy levels of the unknown Hamiltonian from observation pairs alone.
\end{abstract}

\section{Introduction}
\label{sec:intro}

Dense correspondence is a foundational task across imaging domains, establishing a per-pixel mapping between two views of related content. Methods for this problem fall broadly into two architectural traditions. Descriptor-based methods, from classical SIFT\citep{lowe2004sift} to learned local descriptors\citep{revaud2019r2d2}, learned and dense feature matchers\citep{sarlin2020superglue, edstedt2024roma}, and cost-volume networks for optical flow\citep{dosovitskiy2015flownet, sun2018pwcnet, teed2020raft, sun2021loftr}, all encode each image independently and infer correspondence from a similarity computation between feature vectors. Concatenation-based registration networks\citep{balakrishnan2019voxelmorph, chen2022transmorph}, dominant in medical imaging, take the channel-stacked image pair as input to a learned encoder and regress the displacement field directly, never constructing an explicit similarity. Phase correlation \citep{kuglin1975phasecorr} is the structural exception: it measures the inter-image transformation directly in the frequency domain, never constructing a descriptor and never relying on a learned similarity. However, its applicability is bounded by the rigidity of the prescribed Fourier basis: each cross-power-spectrum peak encodes a single shift parameter, confining the method to global translation.

Phase correlation's rigidity is well known; less remarked is what its rigidity buys. When phase correlation fails, the diagnosis is principled: the shift theorem assumed a transformation it cannot represent. This kind of structural transparency, in which failure modes are traceable to specific violated assumptions, has no analog in the descriptor- and concatenation-based architectures that have come to dominate dense correspondence. When the primitive of these methods fails, the response is typically additive, not corrective. While equivariant and geometric deep learning \citep{cohen2016gcnn, worrall2017harmonic, weiler2019e2, cohen2020general} have advanced the structural agenda in adjacent settings, dense correspondence has gone the other way: the architectural commitment to independent encoding has been amplified across two decades of work, yet the alternative of letting the architecture encode part of the transformation directly has remained largely unexplored.

\begin{figure}[t]
\centering
\begin{subfigure}[t]{0.46\linewidth}
\centering
\includegraphics[width=\linewidth]{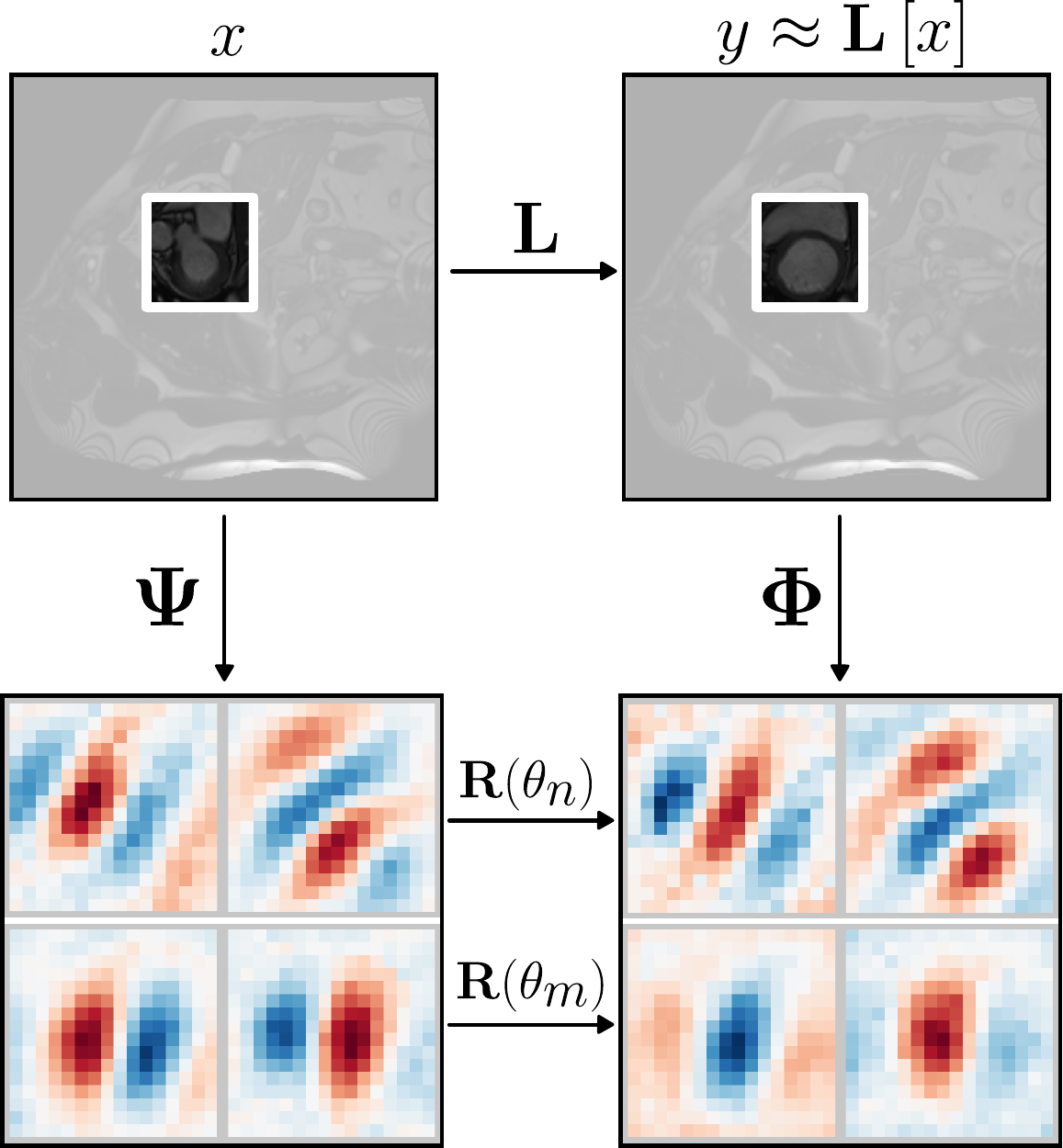}
\caption{Cardiac MRI registration.}
\label{fig:overview_acdc}
\end{subfigure}\hfill
\begin{subfigure}[t]{0.46\linewidth}
\centering
\includegraphics[width=\linewidth]{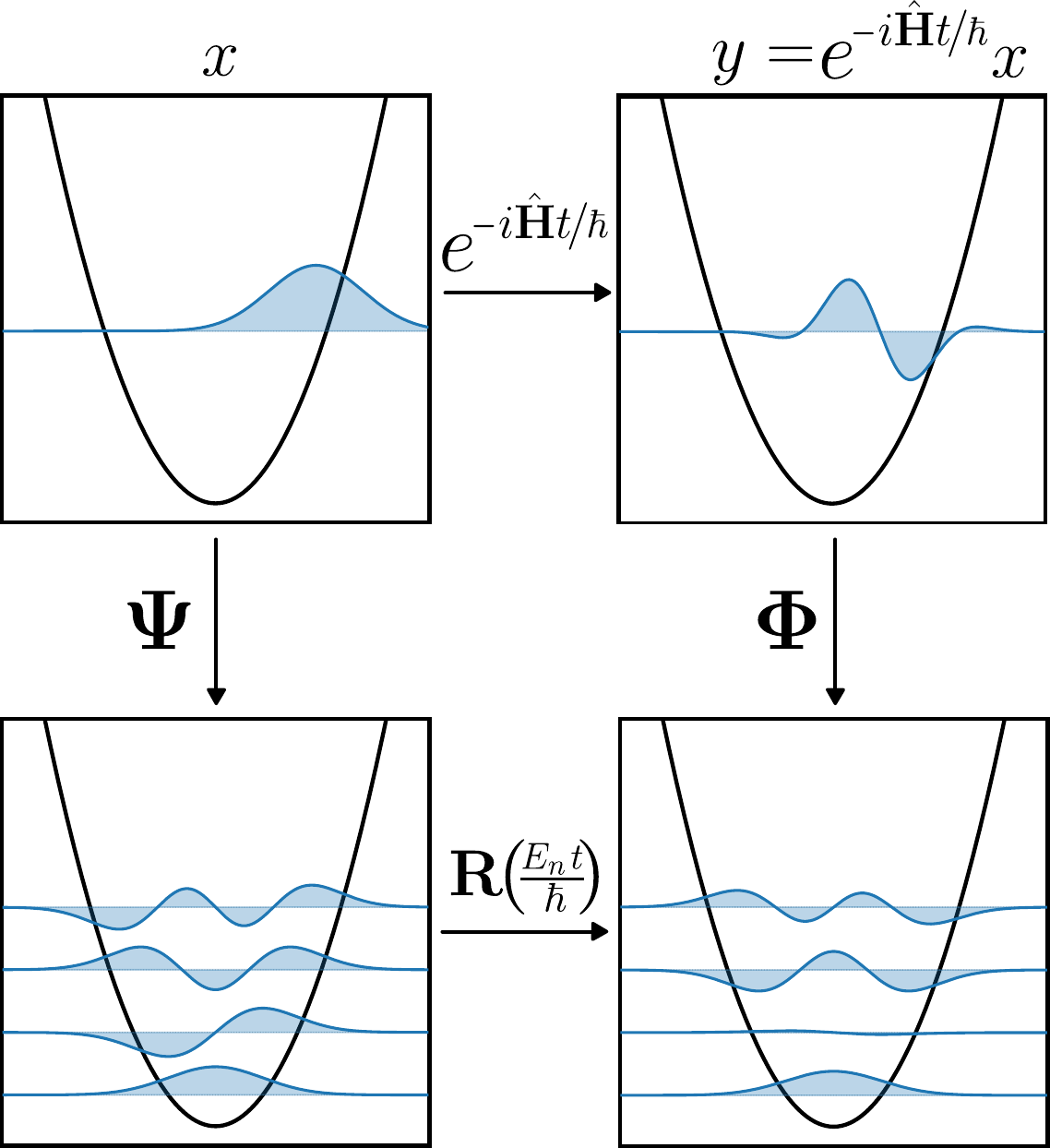}
\caption{1-D quantum harmonic oscillator.}
\label{fig:overview_qho}
\end{subfigure}
\caption{For filter-pair index \(n\), the projection of the moving patch \(\boldsymbol{\psi}_{n}^T x\) and the projection of the fixed patch \(\boldsymbol{\phi}_{n}^T y\) lie in a 2-D subspace on which the local transformation acts as a planar rotation \(\mathbf{R}(\theta_n)\) (Eq.~\eqref{eq:action}, Eq.~\eqref{eq:interaction}). \textbf{Left:} cardiac MRI registration. \textbf{Right:} time-evolution of the 1-D quantum harmonic oscillator, with \(\mathbf{L}=e^{-i\hat{H}t/\hbar}\) and \(\theta_n=E_n t/\hbar\).}
\label{fig:overview}
\end{figure}

The natural generalization, then, is to keep the relational primitive but learn the basis on which it operates. We introduce a framework that recovers the per-subspace rotations of the operator relating two observations, replacing the fixed Fourier basis of classical phase correlation with two learned bases, \(\mathbf{\Psi}\) and \(\mathbf{\Phi}\) (Figure~\ref{fig:overview}). Each pair of filters learns to span a 2-D subspace on which the local operator acts as a planar rotation \(\mathbf{R}(\theta)\), the direct generalization of the per-frequency translation phase that classical phase correlation extracts.

The framework's central mechanism is a closed-form residual \(r_k^2\) (Eq.~\eqref{eq:residual}) that tests, per-location and per-subspace, whether the assumed rotation structure actually holds. The residual gates the architecture in two ways: at inference, it masks subspaces where the structure breaks down, preventing them from corrupting the displacement estimate; during training, top-K selection shapes the filters toward specialists for specific transformations rather than generalists for every transformation (Figure~\ref{fig:residual_dist}). Because the residual is derived from the algebraic model itself, it has no analog in descriptor-based pipelines, and its magnitude reveals where the structural assumption holds and where it fails.

The encoder is task-agnostic: filter pairs \(\mathbf{\Psi}, \mathbf{\Phi}\) and per-pair rotation angles \(\{\theta_n\}\) encode the local transformation regardless of what the transformation represents. The decoder that consumes this encoding is task-specific. For diffeomorphic image registration, the angles drive a dense displacement field \(\varphi(\mathbf{r})\) such that \(y(\mathbf{r})\approx x\left(\mathbf{r}+\varphi(\mathbf{r})\right)\), turning the phase-correlation primitive into a fully differentiable, dense-registration module. For unitary time-evolution \(e^{-i\hat{H}t/\hbar}\) of quantum states, the angle \(\theta_n\) is the eigenenergy phase \(E_n \tau/\hbar\) for some elapsed time \(\tau\), and the filters themselves reconstruct the wavefunction at \(\tau\) in closed form (Section~\ref{sec:qho}).

\paragraph{Contributions.}
\begin{itemize}
  \item A learned generalization of phase correlation that replaces the fixed Fourier basis with paired filter sets (\(\mathbf{\Psi}\), \(\mathbf{\Phi}\)) whose 2-D spans align with invariant subspaces of the local orthogonal operator, extending the phase-correlation primitive from global translation to dense, spatially-varying correspondence under non-rigid deformation.
  \item A closed-form, per-location residual \(r_k^2\) (Eq.~\eqref{eq:residual}, with necessity and almost-sure sufficiency proved in Appendix~\ref{app:residual}) that tests whether each filter-pair span is invariant under the local transformation. The residual masks non-invariant subspaces at inference, shapes the filter set toward consistent specialists during training via per-pair top-K selection (Appendix~\ref{sec:scaling}, Figure~\ref{fig:residual_dist}), and serves as a built-in diagnostic revealing where the structural assumption holds and where it breaks down.
  \item A non-autonomous ODE on the displacement field that re-evaluates the model against the progressively warped moving image at each time step. Coupled to the residual mask, the active subspaces shift across integration time, recovering deformations that a single forward pass cannot.
  \item State-of-the-art on ACDC cardiac MRI (highest published Dice on the harder ED\(\rightarrow\)ES direction; highest across-direction average under matched-loss training) and a match to the strongest CAMUS baseline without its auxiliary scoring or adaptive-smoothness machinery. Deformation breadth: validation on 2-D brain MRI slices and sub-meter synthetic-aperture-radar imagery (Appendix~\ref{app:breadth}), where it outperforms classical phase correlation.
  \item Applied to time-evolved wavefunction pairs of the 1-D quantum harmonic oscillator with randomized, unobserved time intervals \(\tau\), the same primitive recovers the Hermite-function eigenstates and the quantized energy levels of the unknown Hamiltonian from observation pairs alone (Section~\ref{sec:qho}). 
\end{itemize}

More broadly, this work generalizes phase correlation into a tool for recovering orthogonal operators from observation pairs.

\section{Related Work}
\label{sec:related}

\subsection{Medical image registration}
\label{sec:related-medical}

Learned medical registration has converged on a similarity-driven paradigm: from DIRNet~\citep{devos2017dirnet} through VoxelMorph \citep{balakrishnan2019voxelmorph} and its successors \citep{chen2022transmorph, zhang2024adacs, jia2022lkunet, guo2024mambamorph, jia2024lessnet}, two images are stacked as input channels of a learned encoder that regresses a displacement field, inheriting the similarity-driven optimization of classical iterative methods.

\subsection{Spectral and operator learning for dynamical systems}
\label{sec:related-spectral}
Outside image correspondence, several lines learn structural representations of dynamical operators from observation data, each committing to a particular mediating structure: a learned linear latent in the Koopman-net approaches \citep{lusch2018deep, takeishi2017koopman}; a variational wavefunction trained against a known Hamiltonian in neural-network quantum states \citep{carleo2017solving}; a PDE residual at collocation points in physics-informed neural networks \citep{jin2022pinn}; and a parameterized Hamiltonian ansatz \citep{kao2024hamiltonianlearning}. Our framework instead consumes observation pairs directly, learning only the per-pair 2-D invariant-subspace decomposition from which the spectrum follows.

\subsection{Learned Transformation Representations}
Our encoder inherits its bilinear construction from a line of work originating with \citet{memisevic2007unsupervised}, whose three-way multiplicative connections represented inter-image transformations directly but at cubic parameter scaling. The three-way tensor was subsequently factored into per-image filter banks combined multiplicatively at the response level~\citep{memisevic2010factored}, introducing the parallel-bank construction we adopt. They demonstrated empirically that training on global shifts produces Fourier filter pairs in approximate quadrature, with the frequency-domain interpretation made explicit. The factored interaction was reinterpreted as detection of rotation angles within shared invariant eigen-subspaces of the transformation operator~\citep{memisevic2012multiview, memisevic2013relate}. This line demonstrated its principles for transformation detection and Eulerian-styled reconstruction on small image pairs. We extend the bilinear primitive to dense, spatially-varying correspondence at full resolution and add a closed-form per-subspace residual that tests invariance directly.

\section{Method}
\label{sec:method}
\label{sec:pc-reduction}


\begin{figure}[!ht]
\centering
\includegraphics[width=\linewidth]{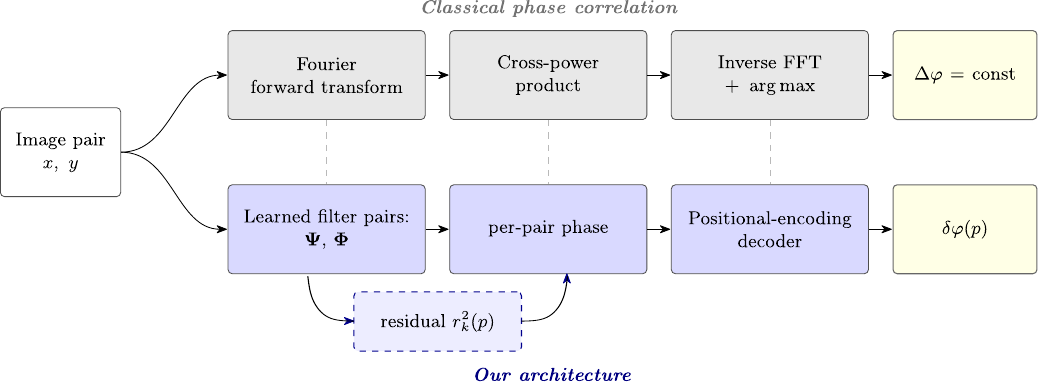}
\caption{The proposed architecture generalizes the three stages of classical phase correlation. The per-pair invariance residual $r_k^2(p)$ (dashed) is a parallel branch with no classical analog: it is computed from the filter-pair responses and yields a top-$K$ mask that selects which filter pairs contribute per-pair phase to the decoder.}
\label{fig:pipeline}
\end{figure}

\begin{table}[!ht]
\centering
\small
\setlength{\tabcolsep}{4pt}
\caption{Stage-by-stage correspondence between classical phase correlation and the proposed architecture.}
\label{tab:pc_correspondence}
\begin{tabular}{lll}
\toprule
Stage & Classical phase correlation & This architecture \\
\midrule
Forward transform    & Fourier + conjugate (fixed)                          & Filter pairs \(\psi_k\), \(\phi_k\) (learned) \\
Per-frequency output & \(\mathrm{Re}/\mathrm{Im}\) of \(\hat x, \overline{\hat y}\) & Pair responses \(\psi_k^T x_k\), \(\phi_k^T y_k\) \\
Cross-power product  & \(\hat x \, \overline{\hat y}\)                       & Bilinear interaction of pair responses \(z_k\) \\
Phase extracted      & Translation angle \(\omega_k \cdot \Delta\varphi\)          & \((\cos\theta_k,\, \sin\theta_k,\, |z_k|)\) \\
Transformation model & Global pure translation                              & Spatially-varying orthogonal \(L_p\) \\
Inverse/integration  & Inverse FFT \(\Rightarrow\) 2-D plane with delta spike & PE residual branch (learned) \\
\bottomrule
\end{tabular}
\end{table}

For clarity, we describe the framework in image-registration language throughout this section and the registration experiments; Section~\ref{sec:qho} re-applies the same algebraic structure with the operator no longer an image warp, with notation adapted to wavefunctions and time evolution. Readers unfamiliar with classical phase correlation may find the brief overview in Appendix~\ref{app:phase-correlation} useful before reading this section.

Image correspondence asks "where does this pixel go?". In the discrete limit it is a permutation \(P\), and permutation matrices are orthogonal (\(P^T P = I\)); the patch-level model \(L_p \in O(n)\) that follows is the natural continuous relaxation.

Let \(x(p), y(p) \in \mathbb{R}^{2C}\) denote the vectorized sub-image patches of the moving and fixed images centered at location \(p\). We model the patch transformation at each location as orthogonal, an approximation that holds locally for small displacements in the registration setting,
\begin{equation}
y(p) = L_p\, x(p), \quad L_p = \mathbf{U} \left[\begin{smallmatrix} R(\theta_1) & & \\ & \ddots & \\ & & R(\theta_C) \end{smallmatrix}\right] \mathbf{U}^{T}, \quad \mathbf{U} = [\hat{u}_1, \ldots, \hat{u}_{2C}].
\label{eq:action}
\end{equation}
Here \(\mathbf{U}\) is the (unknown, location-dependent) eigenbasis of \(L_p\), and the angles \(\theta_c\) likewise depend on \(p\) (notation suppressed for brevity); each \(R(\theta_c)\in SO(2)\) is a planar rotation block on the 2-D subspace spanned by the \(c\)-th column pair of \(\mathbf{U}\). The model interacts with the image pair only through the learned filter banks \(\mathbf{\Psi}\) and \(\mathbf{\Phi}\), which are themselves location-independent. The encoder learns column-pair subspaces \(\boldsymbol{\psi}_k\) and \(\boldsymbol{\phi}_k\) whose spans align with the eigen-subspaces of \(L_p\) at as many locations as possible (full decomposition in Appendix~\ref{app:architecture}); the residual \(r_k^2(p)\) tests whether this alignment holds and gates the architecture accordingly. Classical phase correlation corresponds to the special case where \(\mathbf{U}\) is the spatial Fourier basis: the column pairs are the (\(\cos\), \(\sin\)) components of each frequency, and the bilinear interaction reduces to the cross-power spectrum.

\paragraph{Per-location, per-subspace residual.}
Let \(\boldsymbol{\psi}_{k}, \boldsymbol{\phi}_{k} \in \mathbb{R}^{n \times 2}\) denote the \(k\)-th moving- and fixed-image filter pairs of \(\mathbf{\Psi}\) and \(\mathbf{\Phi}\), each a 2-column matrix in patch space. Because the pairs are learned and the bank is finite, the spans of \(\boldsymbol{\psi}_{k}\) and \(\boldsymbol{\phi}_{k}\) cannot align with an invariant subspace of \(L_p\) at every location. Since orthogonal operators preserve inner products, and \(\boldsymbol{\phi}_{k}^T y \approx \boldsymbol{\phi}_{k}^T L_p\, x\) under the orthogonal model, 
a mismatch between \(\boldsymbol{\psi}_{k}^T x\) and \(\boldsymbol{\phi}_{k}^T y\) is direct 
evidence of mis-alignment. The residual
\begin{equation}
r_k^2(p) \;=\; \bigl(\|\boldsymbol{\phi}_{k}^T y(p)\| - \|\boldsymbol{\psi}_{k}^T x(p)\|\bigr)^2, 
\qquad 
\boldsymbol{\psi}_{k}^T x(p) \;\in\; \mathbb{R}^2,
\qquad
\boldsymbol{\phi}_{k}^T y(p) \;\in\; \mathbb{R}^2,
\label{eq:residual}
\end{equation}
formalizes this test. When \(\boldsymbol{\psi}_{k}\) and \(\boldsymbol{\phi}_{k}\) span a common subspace, orthogonality of \(L_p\) guarantees \(r_k^2(p)=0\) if and only if that span is invariant under \(L_p\) (Lemma~\ref{lem:necessity} and Theorem~\ref{thm:sufficiency} in Appendix~\ref{app:residual}, with sufficiency holding almost surely under mild distributional assumptions on the moving image). The geometrically-normalized form \(\bar{r}_k^2(p)=r_k^2(p)/(\|\boldsymbol{\phi}_k^Ty\|\|\boldsymbol{\psi}_k^Tx\|) \), used as the masking criterion throughout, inherits the same vanishing conditions.

\paragraph{Residual as a masking mechanism.}
A differentiable top-\(K\) selection \citep{xie2020differentiable} retains the \(K\) filter pairs with the smallest residuals at each location, zeroing the remaining \((C-K)\) before they propagate deeper into the network and changing the supervision incentive: the filters are allowed to be "greedy". A pair sharply aligned on some inputs and misaligned on others is rewarded for the former and dropped for the latter. As a result, the bank converges toward specialists rather than generalists, with per-pair residuals at active locations an order of magnitude below the unmasked baseline
(Figure~\ref{fig:residual_dist}).

\begin{figure}[!ht]
\centering
\includegraphics[width=\linewidth]{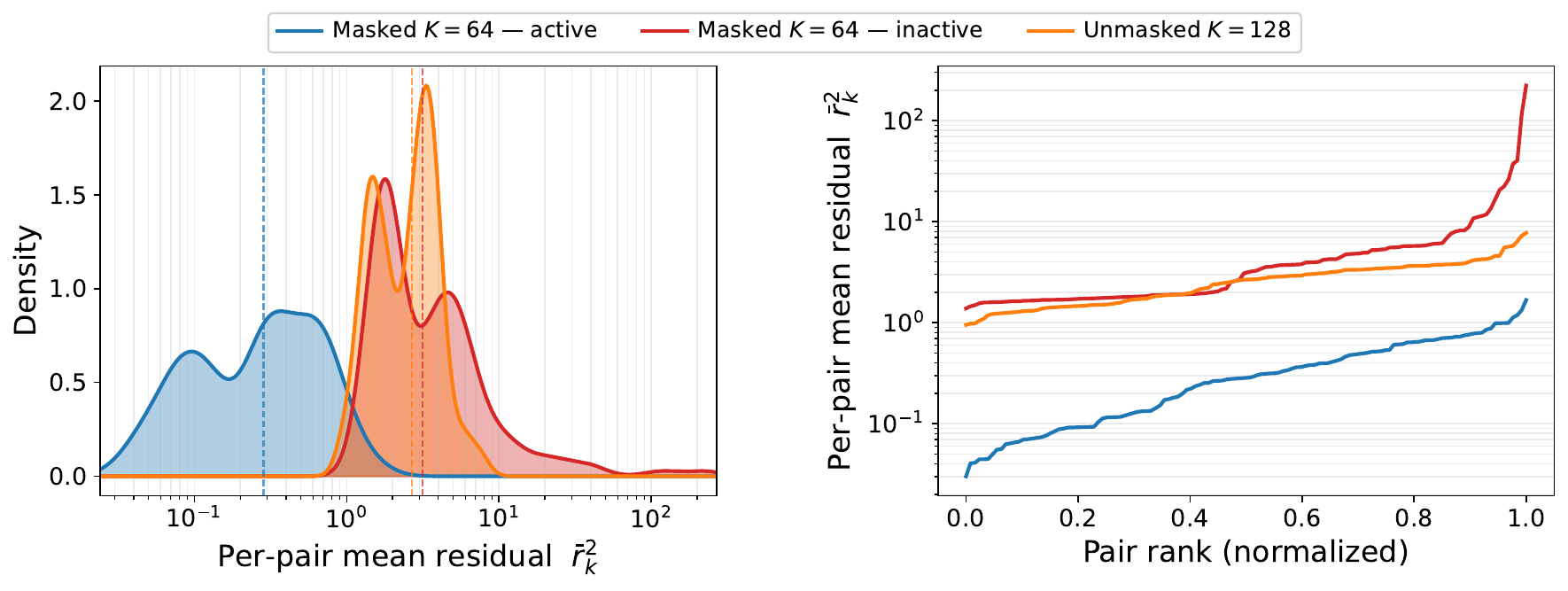}
\caption{Per-pair mean residual \(\bar r_k^2\) on held-out ACDC test pairs (\(C = 128\); masked: \(K = 64\), unmasked: \(K = C\)). \textbf{Left:} Histograms; \textbf{Right:} Sorted per-pair means. The masked distribution is split into \emph{active} pairs (pair survived top-\(K\)) and \emph{inactive} pairs (pair dropped). The histograms make the separation directly visible: at active pairs, masked residuals fall an order of magnitude below the unmasked baseline (median \(0.25\) vs \(2.67\)); at inactive pairs, the same pairs would have matched or exceeded unmasked (median \(\sim 2.55\)). Full interpretation: Appendix~\ref{app:scaling}.}
\label{fig:residual_dist}
\end{figure}

\paragraph{ODE instantiation.}
Each forward pass of the model \(M\) produces a displacement update \(\delta\varphi\). Large deformations are recovered by iterating a non-autonomous ODE on the deformation field:
\begin{equation}
\frac{d\varphi}{dt} \;=\; M\bigl(x \circ \varphi(t),\; y\bigr), \qquad \varphi(0) = \mathrm{id}.
\label{eq:ode}
\end{equation}
The model is re-evaluated at every step against the progressively-refined warped moving image \(x \circ \varphi(t)\), distinct from stationary velocity-field formulations \citep{arsigny2006log} that assume a fixed velocity over the integration interval. Re-evaluating at every step preserves the greediness of the per-pair top-\(K\) mask in time as well as in space: at each \(t\) the active filter pairs are determined by the current warped state, not the initial state. The active subspaces shift with integration time as a result, with low-frequency pairs typically dominating early steps to capture coarse displacement and higher-frequency pairs taking over for fine-scale refinement (Figure~\ref{fig:residual_per_pair_camus_staged}). 

\section{Experiments}
\label{sec:experiments}

\subsection{Cardiac cine MRI (ACDC)}
\label{sec:acdc}

Under our preferred Normalized Cross Correlation (NCC) training recipe, our masked-residual architecture reports 0.863 average Dice on ACDC \citep{bernard2018acdc} and 0.843 on the harder ED\(\rightarrow\)ES direction, exceeding the strongest published baselines (DWCP at 0.837 / 0.813 under MSE). To isolate the architectural contribution from the loss change, we retrain VoxelMorph under our NCC recipe; our architecture exceeds matched-loss VoxelMorph + NCC, and VoxelMorph itself gains a comparable +0.029 from MSE to NCC, so the loss change does not asymmetrically favor our architecture (full loss-control breakdown in Appendix~\ref{app:acdc-protocol}). Under the exact \citet{jian2025disentangling} MSE protocol we exceed the published baselines on the harder ED\(\rightarrow\)ES direction (0.819; Table~\ref{tab:acdc}). Reported numbers are per-class mean \(\pm\) 1-\(\sigma\) sample standard deviation of per-pair test scores from seed 42; cross-seed stability across seeds 42, 43, 44 is reported separately in Appendix~\ref{app:acdc-protocol}.

\begin{table}[h]
\centering
\small
\caption{ACDC bidirectional registration, our 2-D protocol with NCC similarity (vs.\ MSE in
Table~\ref{tab:acdc}); training recipe and metric definitions in
Appendix~\ref{app:acdc-protocol}. The VoxelMorph + NCC row uses our identical recipe; only the
architecture differs.}
\label{tab:acdc_ncc}
\setlength{\tabcolsep}{3pt}
\begin{tabular}{l c c c c c}
\toprule
Method                    & ED\(\rightarrow\)ES Dice    & ES\(\rightarrow\)ED Dice    & Avg                  & SDlogJ            & NDV \(/10\)k       \\
\midrule
Initial (no registration) & \(0.602 \pm 0.148\)         & \(0.602 \pm 0.148\)         & \(0.602\)            & ---               & ---                \\
\midrule
VoxelMorph + NCC          & \(0.817 \pm 0.103\)         & \(0.867 \pm 0.083\)         & \(0.842\)            & \(8.55 \pm 1.35\) & \(83.1 \pm 152.3\) \\
Ours (masked)             & \(\mathbf{0.843 \pm 0.104}\)& \(\mathbf{0.883 \pm 0.080}\)& \(\mathbf{0.863}\)   & \(7.28 \pm 1.79\) & \(76.4 \pm 158.4\) \\
\bottomrule
\end{tabular}
\end{table}

\begin{table}[h]
\centering
\small
\caption{ACDC bidirectional registration, \citet{jian2025disentangling} 2-D MSE protocol. Baselines as reported in their Table~5.}
\label{tab:acdc}
\setlength{\tabcolsep}{6pt}
\begin{tabular}{l c c c}
\toprule
Method                                          & ED\(\rightarrow\)ES Dice    & ES\(\rightarrow\)ED Dice    & Avg            \\
\midrule
Initial (no registration)                       & \(0.602 \pm 0.148\)         & \(0.602 \pm 0.148\)         & \(0.602\)      \\
\midrule
VoxelMorph \citep{balakrishnan2019voxelmorph}   & \(0.788 \pm 0.108\)         & \(0.839 \pm 0.083\)         & \(0.814\)      \\
TransMorph \citep{chen2022transmorph}           & \(0.804 \pm 0.097\)         & \(0.855 \pm 0.074\)         & \(0.830\)      \\
LKU-Net    \citep{jia2022lkunet}                & \(0.777 \pm 0.107\)         & \(0.822 \pm 0.087\)         & \(0.800\)      \\
Mam-VXM    \citep{jian2025disentangling}        & \(0.800 \pm 0.099\)         & \(0.853 \pm 0.076\)         & \(0.827\)      \\
Mam-TM     \citep{jian2025disentangling}        & \(0.804 \pm 0.099\)         & \(0.854 \pm 0.077\)         & \(0.829\)      \\
Dual       \citep{jian2025disentangling}        & \(0.781 \pm 0.110\)         & \(0.831 \pm 0.088\)         & \(0.806\)      \\
DWP        \citep{jian2025disentangling}        & \(0.803 \pm 0.109\)         & \(0.854 \pm 0.085\)         & \(0.829\)      \\
DWCP       \citep{jian2025disentangling}        & \(0.813 \pm 0.103\)         & \(\mathbf{0.861 \pm 0.079}\)& \(\mathbf{0.837}\) \\
DWCPI      \citep{jian2025disentangling}        & \(0.812 \pm 0.104\)         & \(0.858 \pm 0.084\)         & \(0.835\)      \\
\midrule
Ours (masked)                                   & \(\mathbf{0.819 \pm 0.112}\)& \(0.836 \pm 0.103\)         & \(0.828\)      \\
\bottomrule
\end{tabular}
\end{table}

The pairing has structural basis: NCC and our encoder share the same intensity-scale invariance, while MSE is sensitive to intensity bias and scaling that our normalization discards. Each method is reported at its structurally aligned loss (DWCP at 0.813 under MSE, ours at 0.843 under NCC), and the +0.030 Dice gap reflects the architectural payoff rather than a loss-pairing artifact. Representative registrations and residual diagnostics appear in Figures~\ref{fig:registration_examples_acdc} and~\ref{fig:residual_per_pair_acdc}.

\input{figures/registration_examples_acdc/figure_registration_examples_acdc.tex}

\begin{figure}[!ht]
\centering
\includegraphics[width=\linewidth]{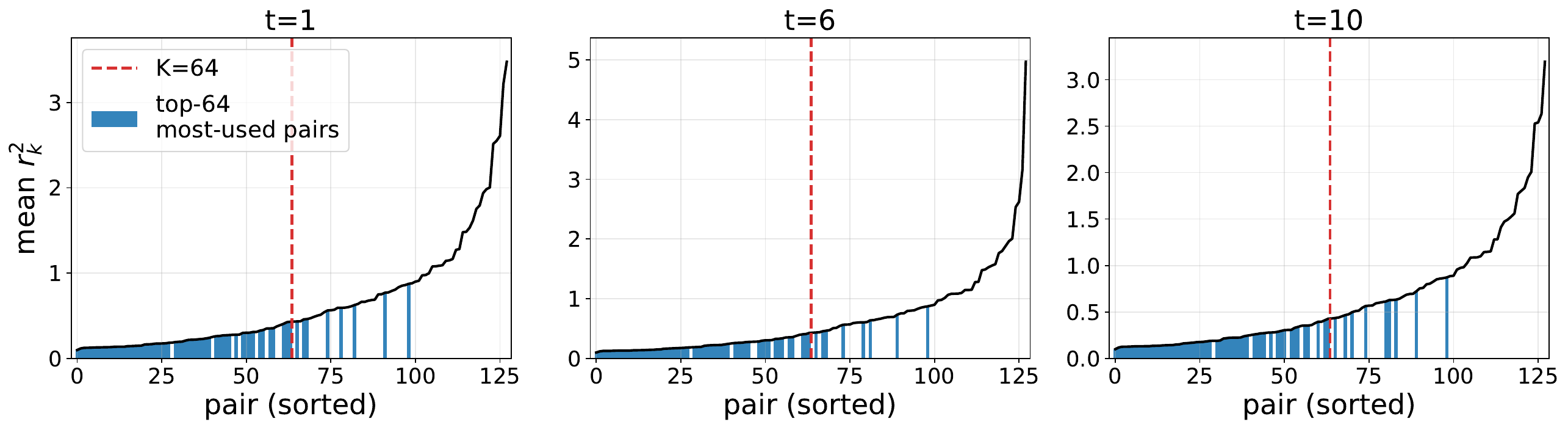}
\caption{Per-pair sorted residuals at three ODE integration steps (first, middle, last) for a held-out ACDC test pair. Black curve: per-pair mean $\bar r_k^2$ sorted ascending. Blue fill: the $K{=}64$ pairs most frequently selected by the top-$K$ mask at this step. Red dashed line: $K$ cutoff. Each panel is independently y-scaled.}
\label{fig:residual_per_pair_acdc}
\end{figure}

\subsection{Echocardiography (CAMUS)}
\label{sec:camus}

We evaluate on CAMUS \citep{leclerc2019camus} ED\(\rightarrow\)ES reporting LV myocardium Dice as in \citet{zhang2024adacs} (Appendix~\ref{app:camus-protocol}). On CAMUS we deploy two cascaded stages of the same architecture, alternating within a single ODE integration: a coarse stage with effective kernel \(k_f=64\) at stride 8 for the first half of steps, then a fine stage with kernel \(k_f=32\) at stride 4 refining against the coarse-warped image.

\begin{table}[h]
\centering
\small
\caption{CAMUS ED\(\rightarrow\)ES registration, LV myocardium Dice. Baselines as reported in
\citet{zhang2024adacs} Tables~1 and~5; ours is reported at seed 42, with cross-seed stability in Appendix~\ref{app:camus-protocol}.}
\label{tab:camus}
\setlength{\tabcolsep}{8pt}
\begin{tabular}{l c}
\toprule
Method                                              & Myo.\ Dice           \\
\midrule
Initial (no registration)                           & \(0.668\)            \\
Elastix (classical)                                 & \(0.802\)            \\
\midrule
DiffuseMorph + MSE                                  & \(0.752 \pm 0.087\)  \\
TransMorph + MSE \citep{chen2022transmorph}         & \(0.792 \pm 0.061\)  \\
VoxelMorph + MSE \citep{balakrishnan2019voxelmorph} & \(0.815 \pm 0.056\)  \\
VoxelMorph + AdaCS \citep{zhang2024adacs}           & \(\mathbf{0.817 \pm 0.054}\) \\
\midrule
Ours (masked, NCC, 2 stages)                        & \(\mathbf{0.820 \pm 0.060}\) \\
\bottomrule
\end{tabular}
\end{table}

Our two-stage architecture matches VoxelMorph + AdaCS at 0.820 Dice vs 0.817 (within standard deviations), without the auxiliary correspondence-scoring estimator or adaptive-smoothness machinery; the residual mechanism described in Section~\ref{sec:method} provides equivalent per-location reliability gating internally. Per-step residual diagnostics on a representative pair appear in Figure~\ref{fig:residual_per_pair_camus_staged}.

\input{figures/registration_examples_camus/figure_registration_examples_camus.tex}

\begin{figure}[!ht]
\centering
\includegraphics[width=\linewidth]{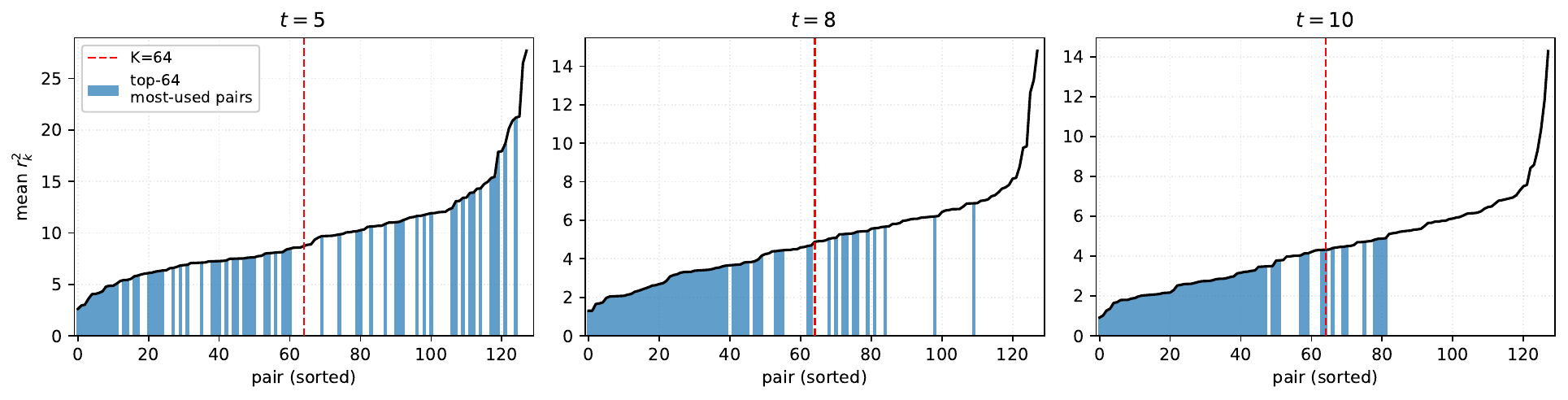}
\caption{Per-pair sorted residuals at the first, middle, and last ODE integration steps of the fine stage for a held out CAMUS test pair (two-stage model, NCC; $C{=}128$ filter pairs). Black curve: per-pair mean $\bar r_k^2$ sorted ascending. Blue fill: the $K{=}64$ pairs most frequently selected by the top-$K$ mask at this step. Red dashed line: $K$ cutoff. Each panel is independently y-scaled.}
\label{fig:residual_per_pair_camus_staged}
\end{figure}

\subsection{Quantum Harmonic Oscillator: Eigenstate recovery from time pairs}
\label{sec:qho}

The image-correspondence experiments above test the framework within a single problem family. To probe the algebraic generality of the primitive (filter pairs spanning 2-D invariant subspaces of an orthogonal operator), we apply the same architecture, with the spatial machinery stripped, to a problem in which the operator is no longer an image warp.

For a time-independent Hamiltonian \(H\) with discrete spectrum (eigenvalue/vector pairs) \(\{E_n, \psi_n\}\), the time-evolution operator
\(U(t) = \mathrm{exp}(-iHt/\hbar)\)
is unitary and acts on each non-degenerate energy eigenspace as a planar rotation \(R(E_nt/\hbar)\in SO(2)\), the same block-diagonal structure of Eq.~\eqref{eq:action} with \(L = U(t)\). The filter-pairs learn the spatial representation of the eigenstates, and the bilinear interaction extracts the per-pair phase \(E_n\tau/\hbar\).

We pick the 1-D quantum harmonic oscillator (QHO) for the cleanest possible test: eigenstates and energies are known analytically (Hermite functions, \(E_n = \hbar\omega(n+\tfrac{1}{2})\)). Training pairs are random unit-norm complex superpositions \(\Psi(0) = \sum_{n=0}^{15} c_n \psi_n^{\mathrm{true}}\) time-evolved analytically to \(\Psi(\tau)\), with each \(c_n\) sampled from a complex normal distribution and \(\tau\) drawn uniformly per pair from \([0.05,\tau_\mathrm{max}]\), \(\tau_\mathrm{max}\in\{0.6,1.7\}\). The model never sees \(H\), the eigenstates, the coefficients, or \(\tau\) (only the wavefunction pair). The architecture consists of \(K=16\) learned 1-D filters \(\boldsymbol{\psi}_k\), a per-filter learned rotation rate \(E_k\), and a small permutation-equivariant readout MLP that recovers \(\tau\); the wavefunction at \(\tau\) is reconstructed in closed form (Appendix~\ref{app:qho}).
\begin{figure}[!ht]
\centering
\includegraphics[width=\linewidth]{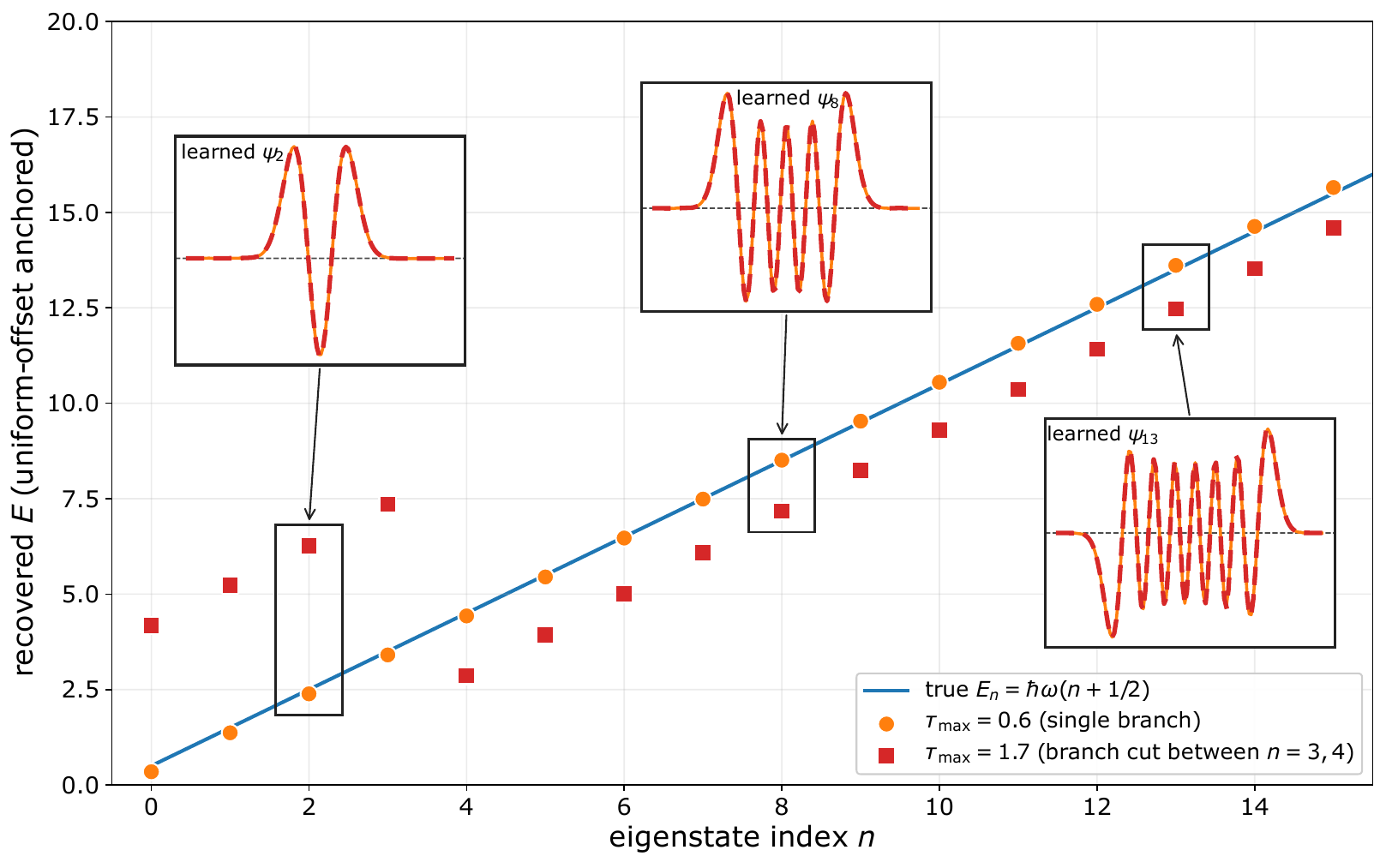}
\caption{Cross-output asymmetry at \(\tau_\mathrm{max}=1.7\). Recovered energy levels show fragmentation into two branches between the \(n=3\) and \(n=4\) states' boundary where \(E_n\tau_\mathrm{max}\) crosses \(2\pi\). Insets show learned eigenstates at representative \(n\); all learned eigenstates coincide with the analytic Hermite eigenstates with overlap 1.000. Full \(E_n\) and \(\psi_n(x)\) recovery in Appendix~\ref{app:qho}.}
\label{fig:qho_eigenstates}
\end{figure}


Figure~\ref{fig:qho_eigenstates} summarizes the results. Learned filters match the analytic Hermite eigenstates pointwise at both \(\tau_{\max}\) regimes (median overlap \(1.000\)). At \(\tau_{\max} = 0.6\), recovered energies match the analytic values up to a single global offset; after calibrating against $E_0$ (read from the single-frequency oscillation of the matrix element $\langle \psi_0 | U(\tau) | \psi_0 \rangle$ (equivalently $\psi_0^T U(\tau) \psi_0$) across pair times, Appendix~\ref{app:qho}; here $E_0 = 0.5$), per-filter absolute energy error falls below \(0.05\) for all \(n\), and the held-out time stamps are recovered with Pearson correlation \(1.000\). From the recovered eigenstates and energies, the truncated Hamiltonian is reconstructed via the spectral decomposition \(\hat{H} = \sum_n E_n^{\mathrm{learned}} |\psi_n^{\mathrm{learned}}\rangle\langle\psi_n^{\mathrm{learned}}|\), with no parameterization of \(H\) or assumed functional form for the operator.

The framework's two outputs, eigenstate recovery and eigenvalue recovery, exhibit different sensitivities to the time-interval distribution. Eigenstate recovery is \(\tau\)-blind: learned filters continue to match the analytic eigenstates pointwise at both \(\tau_{\max} = 0.6\) and \(\tau_{\max} = 1.7\), with \(\tau_{\max} = 1.7\) in the multi-wrap regime where \(E_n \tau_{\max} > 2\pi\) for most filters. The structural primitive that identifies invariant subspaces (the residual) does not depend on \(\tau\), so the eigenvectors are unaffected by how aggressively the time interval is sampled. Eigenvalue recovery has a sharper, per-filter boundary at \(E_n \tau_{\max} < 2\pi\) (Figure~\ref{fig:qho_eigenstates}): at \(\tau_{\max} = 0.6\) the ladder remains clean; at \(\tau_{\max} = 1.7\) the spectrum fragments into two branches at the \(n=3\) / \(n=4\) boundary where \(E_n \tau_{\max}\) crosses \(2\pi\). The mechanism is gauge-mediated: the reconstruction objective, NCC, is invariant to per-pair global phase, so filters with different wrap counts can settle on different unwrap branches without penalty. Eigenstate insets in Figure~\ref{fig:qho_eigenstates} show that the same training run that fragments the spectrum recovers \(\psi_2\) (below the branch cut), \(\psi_8\) and \(\psi_{13}\) (both above) pointwise.

\section{Conclusion}
\label{sec:conclusion}
We introduced a learned generalization of phase correlation that extends the primitive from the global-translation regime in which it has always operated \citep{kuglin1975phasecorr} to dense, spatially-varying non-rigid deformation, and demonstrated its algebraic generality on unitary time-evolution where the same architecture recovers Hamiltonian eigenstates and energies from observation pairs alone. The per-pair residual and filter-bank visualization expose quantities interpretable within the model's algebraic structure. The same algebraic ingredient that solves cardiac registration recovers a quantum Hamiltonian.

The structural commitment fixes the form of the answer in each setting. What the network learns is what the structure cannot determine analytically: which subspaces are locally invariant, how to assemble per-subspace rotations into a coherent output, and how to absorb the approximation gap when the operator is only locally orthogonal. The descriptor-based paradigm asks the network to discover correspondence from scratch; here, the network is responsible only for the residual irregularities of a structure that already encodes most of the task. Limitations are discussed in Appendix~\ref{app:limitations}.


\small
\bibliographystyle{unsrtnat}
\bibliography{references}

\appendix

\section{Classical phase correlation}
\label{app:phase-correlation}

Phase correlation \citep{kuglin1975phasecorr} is a Fourier-domain technique for estimating the global translation between two signals. We illustrate the algorithm in the continuous 1-D setting for notational simplicity.

\paragraph{Setup.}
Let \(f, g : \mathbb{R} \to \mathbb{R}\) be two functions related by a global translation \(c \in \mathbb{R}\), i.e.\ \(g(x) = f(x - c)\). The goal is to recover \(c\) from the pair \((f, g)\) alone.

\paragraph{Fourier shift theorem.}
With the convention \(\hat{f}(k) = \int_{-\infty}^{\infty} f(x)\, e^{-ik x}\, dx\) and \(f(x) = \int_{-\infty}^{\infty} \hat{f}(k)\, e^{ik x}\, dk\) for the Fourier and inverse Fourier transforms of \(f\), respectively, we have
\begin{align*}
\hat{g}(k) &= \int_{-\infty}^{\infty} g(x)\, e^{-ik x}\, dx \\
&= \int_{-\infty}^{\infty} f(x-c)\, e^{-ik x}\, dx \\
&= \int_{-\infty}^{\infty} f(u)\, e^{-ik (u+c)}\, du \;\;\;\;\mathrm{for}\;\; u = x - c \\
&= e^{-ikc}\int_{-\infty}^{\infty} f(u)\, e^{-ik u}\, du \\
&= \hat{f}(k)\, e^{-ikc}.
\end{align*}
The translation is encoded entirely in the spatial frequency phase of \(\hat{g}\) relative to \(\hat{f}\).

\paragraph{Cross-power spectrum and phase normalization.}
Multiplying \(\hat{g}(k)\) by the complex conjugate \(\overline{\hat{f}(k)}\) yields
\[
\overline{\hat{f}(k)}\, \hat{g}(k) = |\hat{f}(k)|^2\, e^{-ikc}.
\]
The amplitude factor \(|\hat{f}(k)|^2\) depends on the spectral content of \(f\); only the phase \(e^{-ikc}\) carries the translation. Normalizing by magnitude removes the amplitude and leaves a pure phase,
\[
A(k) \;=\; \frac{\overline{\hat{f}(k)}\, \hat{g}(k)}{|\overline{\hat{f}(k)}\, \hat{g}(k)|} \;=\; e^{-ikc}.
\]

\paragraph{Peak localization.}
The inverse Fourier transform of \(A\) is a Dirac delta at the displacement,
\[
\int_{-\infty}^{\infty} e^{-ikc}\, e^{ik x}\, dk \;=\; \int_{-\infty}^{\infty} e^{ik(x-c)}\, dk \;=\; \delta(x - c).
\]
Thus applying \(\arg\max\) to \(\delta(x - c)\) or integrating \(\int_{-\infty}^{\infty} x\, \delta(x - c)\, dx\) recovers \(c\) exactly when the global-translation assumption holds.

\paragraph{Vectorized discrete formulation.}
The same construction in matrix form, with \(\mathbf{f}, \mathbf{g} \in \mathbb{R}^N\) the discrete samples of \(f\) and \(g\) on a grid of \(N\) points. The Fourier transform becomes the \(N \times N\) DFT matrix \(\mathbf{F}\) with entries \(\mathbf{F}_{nm} = e^{-i 2\pi nm/N}/\sqrt{N}\); on real inputs its output satisfies the conjugate-symmetry relation \((\mathbf{F}\mathbf{f})_{N-n} = \overline{(\mathbf{F}\mathbf{f})_n}\). The translation \(g(x) = f(x - c)\) becomes
\[
\mathbf{g} \;=\; \mathbf{T}_c\, \mathbf{f}, \qquad \mathbf{T}_c \;=\; \mathbf{F}^{-1}\, \mathbf{D}_c\, \mathbf{F},
\]
where \(\mathbf{D}_c\) is diagonal with \(\mathbf{D}_{c,nn} = e^{-i k_n c}\) and \(k_n = 2\pi n/N\). The factorization is the matrix form of the \emph{FFT \(\to\) phase shift \(\to\) IFFT} algorithm.

\paragraph{Translation is orthogonal.}
On real inputs \(\mathbf{T}_c\) is real-orthogonal: \(\mathbf{T}_c^T \mathbf{T}_c = \mathbf{I}\). Orthogonal matrices are the real-valued counterpart of unitary matrices, with each complex-conjugate eigenvalue pair acting as a real \(2 \times 2\) rotation block on the corresponding 2-D real subspace \citep{gelfand1989linear}; here, the conjugate pair \((e^{-i k_n c}, e^{+i k_n c})\) on the diagonal of \(\mathbf{D}_c\) becomes \(R(k_n c) \in SO(2)\) on the cosine/sine pair at frequency \(k_n\). \(\mathbf{T}_c\) thus has the same block-diagonal-with-rotations structure as the patch operator \(\mathbf{L}_p\) in Eq.~\eqref{eq:action}, and is the formal counterpart of the ``permutation matrices are orthogonal'' observation in Section~\ref{sec:method}.

\paragraph{Spatially-varying displacement.}
When \(c\) is constant, \(\mathbf{T}_c\) is block-diagonal in the spatial harmonic basis: a single \(R(k_n c)\) at every frequency. When \(c\) varies with \(x\),  \(\,g(x) = f(x - c(x))\), the substitution that pulled \(c\) out of the integral fails: \(u = x - c(x)\) no longer has a constant Jacobian, and
\[
\hat{g}(k) \;=\; \int_{-\infty}^{\infty} f(x - c(x))\, e^{-ikx}\, dx \;\neq\; \hat{f}(k)\, e^{-ikc}.
\]
The matrix counterpart is that \(\mathbf{T}_{c(\cdot)}\) is still real-orthogonal, but is no longer block-diagonal in the spatial harmonic basis: the action couples frequencies.

\paragraph{Learning the right basis.}
Spatially-varying displacement remains a real-orthogonal operation at each location --- still block-diagonalizable into \(2 \times 2\) rotations, just not by the spatial harmonic basis. Neural Phase Correlation seeks the basis \(\mathbf{U}\) in which the local transformation \(\mathbf{L}_p\) \emph{is} block-diagonal with \(2 \times 2\) rotation blocks (Eq.~\eqref{eq:action}). Classical phase correlation is the special case in which \(\mathbf{U}\) is the spatial harmonic basis, optimal only when \(c\) is constant. For spatially-varying \(c(x)\), the diagonalizing basis depends on \(c\) itself; the learned filter pairs \(\mathbf{\Psi}, \mathbf{\Phi}\) are the vehicle for recovering it from data.

\section{Residual}
\label{app:residual}

This appendix formalizes the basis for interpreting the per-location, per-subspace residual \(r_k^2(p)\) of Section~\ref{sec:method} as an inference-time validity signal for \(\mathrm{span}(\psi_k)\) under the local orthogonal operator \(L_p\). We first analyze the residual in the idealized scenario in which both filter pairs span a common invariant subspace of \(L_p\), establishing its invariance and swap symmetry. The realistic case in which alignment may fail is then characterized by Lemma~\ref{lem:necessity} and Theorem~\ref{thm:sufficiency}.

\paragraph{Idealized scenario: two filter pairs, three rotations.}
The main-text residual is \(r_k^2 = \bigl(\|\boldsymbol{\phi}_k^T y\| - \|\boldsymbol{\psi}_k^T x\|\bigr)^2\), with separate filter pairs for the moving image (\(\boldsymbol{\psi}_k\)) and the fixed image (\(\boldsymbol{\phi}_k\)). Suppose both pairs span the \(k\)-th 2-D eigenspace of \(L_p\) with orthonormal columns:
\[
\boldsymbol{\psi}_k = U_k\,R(\theta_\psi), \qquad \boldsymbol{\phi}_k = U_k\,R(\theta_\phi),
\]
where \(U_k = [u_{2k-1}, u_{2k}]\) is the \(k\)-th 2-D eigenspace of \(L_p\), with \(L_p|_{U_k} = R(\theta_k) \in SO(2)\), and \(R(\theta_\psi), R(\theta_\phi) \in SO(2)\) are the in-plane orientations of the two filter pairs relative to that eigenframe.\footnote{Restricting to \(SO(2)\) rather than \(O(2)\) excludes a reflection of each filter frame; the norm-equality argument below goes through identically in the reflected case, so the residual's vanishing condition is unaffected.} Three rotations are then in play, all in \(SO(2)\):
\begin{itemize}
\setlength{\itemsep}{0pt}\setlength{\parskip}{0pt}
  \item \(R(\theta_\psi)\) and \(R(\theta_\phi)\): the orientations of the moving and fixed filter pairs in the eigenframe \(U_k\).
  \item \(R(\theta_\psi - \theta_\phi)\): the inter-filter rotation between the moving and fixed pairs, parameterized by a single angle difference.
\end{itemize}

\paragraph{Norm equality and swap symmetry.}
Let \(p_x = U_k^T x\) be the \(U_k\)-projection of \(x\) in eigenframe coordinates, and write \(y = L_p x\). Each of the four projection norms reduces to \(\|p_x\|\):
\begin{align*}
\|\boldsymbol{\psi}_k^T x\| &= \|R(-\theta_\psi)\,p_x\| = \|p_x\|, &
\|\boldsymbol{\phi}_k^T y\| &= \|R(-\theta_\phi)\,R(\theta_k)\,p_x\| = \|p_x\|, \\
\|\boldsymbol{\psi}_k^T y\| &= \|R(-\theta_\psi)\,R(\theta_k)\,p_x\| = \|p_x\|, &
\|\boldsymbol{\phi}_k^T x\| &= \|R(-\theta_\phi)\,p_x\| = \|p_x\|,
\end{align*}
since each is a planar rotation applied to \(p_x\) and \(SO(2)\) preserves the Euclidean norm. Hence \(r_k^2 = 0\), and the swap \((\boldsymbol{\psi}_k, \boldsymbol{\phi}_k) \to (\boldsymbol{\phi}_k, \boldsymbol{\psi}_k)\) gives \(\bigl(\|\boldsymbol{\psi}_k^T y\| - \|\boldsymbol{\phi}_k^T x\|\bigr)^2 = 0\) by the same calculation: the residual is invariant under exchange of moving and fixed filter pairs in the ideal case.

\paragraph{Setup.}
Let \(L \in O(n)\) with \(n = k_f^2\), so \(L^T L = I\), and assume each filter pair is orthonormal,
\begin{equation*}
\boldsymbol{\psi}_k^T \boldsymbol{\psi}_k \;=\; \boldsymbol{\phi}_k^T \boldsymbol{\phi}_k \;=\; I_2,
\end{equation*}
and that the two pairs share a common span: \(\mathrm{span}(\boldsymbol{\psi}_k) = \mathrm{span}(\boldsymbol{\phi}_k)\), equivalently \(\boldsymbol{\phi}_k = \boldsymbol{\psi}_k R\) for some \(R \in O(2)\) (this is the idealized structure; its failure is precisely what the residual measures). The two projectors then coincide,
\begin{equation*}
P \;=\; \boldsymbol{\psi}_k\boldsymbol{\psi}_k^T \;=\; \boldsymbol{\phi}_k\boldsymbol{\phi}_k^T \;\in\; \mathbb{R}^{n \times n},
\end{equation*}
since \(\boldsymbol{\phi}_k\boldsymbol{\phi}_k^T = \boldsymbol{\psi}_k R R^T \boldsymbol{\psi}_k^T = \boldsymbol{\psi}_k\boldsymbol{\psi}_k^T\). For a patch \(x \in \mathbb{R}^n\) and its transformed counterpart \(y = L x\), the moving projection is \(\boldsymbol{\psi}_k^T x \in \mathbb{R}^2\), the fixed projection is \(\boldsymbol{\phi}_k^T y \in \mathbb{R}^2\), and
\begin{equation*}
r^2 \;=\; \bigl(\|\boldsymbol{\phi}_k^T y\| - \|\boldsymbol{\psi}_k^T x\|\bigr)^2.
\end{equation*}

\paragraph{Invariance condition.}
\(\mathrm{span}(\boldsymbol{\psi}_k)\) is invariant under \(L\), i.e.\ \(L\,\mathrm{span}(\boldsymbol{\psi}_k) \subseteq \mathrm{span}(\boldsymbol{\psi}_k)\), if and only if the projector \(P\) commutes with \(L\):
\begin{equation*}
[P, L] \;:=\; P L - L P \;=\; 0.
\end{equation*}
Under this condition the restriction \(L|_{\mathrm{span}(\boldsymbol{\psi}_k)}\) is itself orthogonal: for any \(u, v \in \mathrm{span}(\boldsymbol{\psi}_k)\), \(\langle Lu, Lv \rangle = \langle u, v \rangle\) since \(L \in O(n)\) preserves inner products globally, so its restriction inherits the property. On the 2-D subspace \(\mathrm{span}(\boldsymbol{\psi}_k)\), \(L\) therefore acts as a planar rotation \(R(\theta)\) for some \(\theta\) (the \(SO(2)\) case) or a rotation composed with a reflection (the \(O(2) \setminus SO(2)\) case); both preserve the Euclidean norm.

\begin{lemma}[Necessity]
\label{lem:necessity}
\emph{If \(\mathrm{span}(\boldsymbol{\psi}_k)\) is invariant under \(L\) (equivalently \([P,L] = 0\)), then \(r^2(x) = 0\) for every \(x \in \mathbb{R}^n\).}
\end{lemma}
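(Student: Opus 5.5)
The plan is to reduce both projection norms to the norm of a single vector, the orthogonal projection \(Px\) of \(x\) onto \(\mathrm{span}(\boldsymbol{\psi}_k)\), and then use invariance to show that \(L\) maps \(Px\) to \(Py\) isometrically. Throughout I work in the Setup above. There \(\boldsymbol{\psi}_k^T\boldsymbol{\psi}_k=\boldsymbol{\phi}_k^T\boldsymbol{\phi}_k=I_2\) and \(P=\boldsymbol{\psi}_k\boldsymbol{\psi}_k^T=\boldsymbol{\phi}_k\boldsymbol{\phi}_k^T\). I also use the Invariance condition paragraph, which establishes that invariance is equivalent to \([P,L]=0\).

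\emph{Step 1: express each norm through \(P\).} For any \(v\in\mathbb{R}^n\),
\[
\|\boldsymbol{\psi}_k^T v\|^2 = v^T\boldsymbol{\psi}_k\boldsymbol{\psi}_k^T v = v^T P v = \|Pv\|^2 .
\]
The last equality holds because \(P\) is a symmetric idempotent, so \(v^TPv=v^TP^TPv\). The identical computation with \(\boldsymbol{\phi}_k\) gives \(\|\boldsymbol{\phi}_k^T v\|=\|Pv\|\), since both pairs induce the same projector. Consequently
\[
r^2(x)=\bigl(\|Py\|-\|Px\|\bigr)^2 ,\qquad y=Lx .
\]

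\emph{Step 2: use commutation and orthogonality.} Since \([P,L]=0\), we have \(Py=PLx=LPx\). Since \(L\in O(n)\) preserves norms, \(\|LPx\|=\|Px\|\). Combining these, \(\|Py\|=\|Px\|\), and therefore \(r^2(x)=0\) for every \(x\in\mathbb{R}^n\).

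\emph{Main obstacle.} The only nontrivial point is justifying \([P,L]=0\) from the inclusion \(L\,\mathrm{span}(\boldsymbol{\psi}_k)\subseteq\mathrm{span}(\boldsymbol{\psi}_k)\), in case the reader does not want to take the equivalence stated above as given. I would argue it directly:
\begin{itemize}
\item Because \(L\) is injective and the subspace is finite-dimensional, the inclusion forces \(LS=S\), where \(S=\mathrm{span}(\boldsymbol{\psi}_k)\).
\item For \(u\in S\) and \(w\in S^\perp\), we have \(\langle Lw,u\rangle=\langle w,L^Tu\rangle\). Since \(L^T=L^{-1}\) also maps \(S\) onto \(S\), this inner product is zero, so \(S^\perp\) is invariant under \(L\) as well.
\item Decompose \(x=Px+(I-P)x\). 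Then \(L\) sends the first component into \(S\) and the second into \(S^\perp\), so \(PLx=LPx\).
\end{itemize}
With this in hand, the lemma follows from Steps 1 and 2. The argument nowhere uses whether \(L|_S\) is a rotation or a reflection.
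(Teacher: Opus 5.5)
Your proposal is correct and is essentially the paper's argument: both combine $PL=LP$ with orthogonality of $L$. The paper writes this as $\|\boldsymbol{\phi}_k^T y\|^2 = x^T L^{-1} P L x = x^T P x$, while you write it as $\|Py\| = \|LPx\| = \|Px\|$. Your additional proof that invariance forces $[P,L]=0$ (via $LS=S$ and $L$-invariance of $S^\perp$) supplies a step that the paper only asserts in its Invariance condition paragraph.
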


\begin{proof}
Since \(L\) is orthogonal, \(L^T = L^{-1}\). Expanding using \(P = \boldsymbol{\phi}_k\boldsymbol{\phi}_k^T = \boldsymbol{\psi}_k\boldsymbol{\psi}_k^T\),
\begin{equation*}
\|\boldsymbol{\phi}_k^T y\|^2 \;=\; y^T P\, y \;=\; x^T L^T P L\, x \;=\; x^T L^{-1} P L\, x.
\end{equation*}
The commutativity \(P L = L P\) gives \(L^{-1} P L = P\), so
\begin{equation*}
\|\boldsymbol{\phi}_k^T y\|^2 \;=\; x^T P\, x \;=\; \|\boldsymbol{\psi}_k^T x\|^2.
\end{equation*}
Taking square roots, \(\|\boldsymbol{\phi}_k^T y\| = \|\boldsymbol{\psi}_k^T x\|\) and hence \(r^2 = 0\).
\end{proof}

\begin{theorem}[Almost-sure sufficiency]
\label{thm:sufficiency}
\emph{Suppose \(\mathrm{span}(\boldsymbol{\psi}_k)\) is not invariant under \(L\), i.e., \([P, L] \ne 0\). If \(x\) is drawn from an absolutely continuous distribution on \(\mathbb{R}^n\) independently of \(L\), then}
\begin{equation*}
\mathbb{P}\bigl(r^2(x) = 0\bigr) \;=\; 0.
\end{equation*}
\end{theorem}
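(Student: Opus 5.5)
The plan is to reduce the event \(\{r^2(x)=0\}\) to the zero set of a single nonzero quadratic form in \(x\), and then use the fact that such a zero set is Lebesgue-null. Since both norms are nonnegative, \(r^2(x)=0\) holds iff \(\|\boldsymbol{\phi}_k^T y\|^2=\|\boldsymbol{\psi}_k^T x\|^2\). Exactly as in the proof of Lemma~\ref{lem:necessity}, with \(y=Lx\) and \(P=\boldsymbol{\phi}_k\boldsymbol{\phi}_k^T=\boldsymbol{\psi}_k\boldsymbol{\psi}_k^T\), this is equivalent to
\begin{equation*}
q_L(x) \;:=\; x^T Q\, x \;=\; 0, \qquad Q \;:=\; L^T P L - P .
\end{equation*}
Here \(Q\) is symmetric, so \(q_L\) vanishes identically iff \(Q=0\).

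The first key step is to show \(Q\neq 0\) whenever \([P,L]\neq 0\). This is the converse direction of the computation in Lemma~\ref{lem:necessity}. Suppose \(L^TPL=P\). Multiplying on the left by \(L\) and using \(LL^T=I\) gives \(PL=LP\), which contradicts the hypothesis. Hence \(Q\neq 0\), and \(q_L\) is a nonzero homogeneous polynomial of degree 2 in \(x\). Note that \(\mathrm{tr}\,Q=\mathrm{tr}\,P-\mathrm{tr}\,P=0\), so \(Q\) is indefinite. Its zero set is therefore a genuine nontrivial cone rather than just \(\{0\}\), and we cannot simply argue definiteness; we need the general measure-zero fact.

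The second step, which I expect to be the main (if standard) obstacle to write carefully, is that the zero set of a nonzero polynomial on \(\mathbb{R}^n\) has Lebesgue measure zero. For a quadratic form this is easy. Diagonalize \(Q=V\Lambda V^T\) by an orthogonal \(V\), which preserves Lebesgue measure. Pick an index with \(\lambda_1\neq 0\). For fixed remaining coordinates \((z_2,\dots,z_n)\), the equation \(\lambda_1 z_1^2 + \sum_{i\ge 2}\lambda_i z_i^2=0\) has at most two solutions \(z_1\). By Fubini/Tonelli the zero set is therefore null. Absolute continuity of the law of \(x\) then gives \(\mathbb{P}(q_L(x)=0\mid L)=0\) for each fixed \(L\) with \([P,L]\neq 0\).

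Finally, I handle \(L\) via independence. Because \(x\) is drawn independently of \(L\), condition on \(L\) (or treat \(L\) as fixed if deterministic) and integrate the conditional probability over the law of \(L\), restricted to the event \([P,L]\neq 0\) that the theorem assumes. This uses Fubini on the product measure together with measurability of \((x,L)\mapsto q_L(x)\), which is a polynomial in the entries of both arguments. The result is \(\mathbb{P}(r^2(x)=0)=0\).
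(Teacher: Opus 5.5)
Your proposal is correct and follows the paper's proof essentially step for step. It uses the same reduction of $\{r^2=0\}$ to the zero set of $x^T(L^TPL-P)x$ (the paper's $A=L^{-1}PL-P$), the same nonvanishing argument via $LA=[P,L]$, and the same appeal to the Lebesgue-null zero set of a nonzero polynomial. Your extras are valid refinements of steps the paper cites or leaves implicit, not a different route: proving the null-set fact for quadratic forms by diagonalization and Fubini, the trace-zero remark, and integrating explicitly over the law of $L$.
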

\begin{proof}
The residual vanishes iff \(\|\boldsymbol{\phi}_k^T y\|^2 \;=\; \|\boldsymbol{\psi}_k^T x\|^2\), i.e.,
\begin{equation*}
x^T A\, x \;=\; 0, \qquad A \;:=\; L^{-1} P L - P.
\end{equation*}

\emph{\(A\) is symmetric.} Using \(P^T = P\), \(L^T = L^{-1}\), and \((L^{-1})^T = L\),
\begin{equation*}
A^T \;=\; L^T P^T (L^{-1})^T - P^T \;=\; L^{-1} P L - P \;=\; A.
\end{equation*}

\emph{\(A \ne 0\).} Left-multiplying \(A\) by \(L\) yields
\begin{equation*}
L A \;=\; P L - L P \;=\; [P, L].
\end{equation*}
Since \(L\) is invertible, \(A = 0 \iff [P, L] = 0\). By hypothesis \([P, L] \ne 0\), hence \(A \ne 0\).

Thus \(x \mapsto x^T A x\) is a nonzero quadratic form on \(\mathbb{R}^n\). The
set where it vanishes, \(\{x : x^T A x = 0\}\), is the zero set of a nonzero
polynomial in \(n\) real variables, and the zero set of
any nonzero polynomial on \(\mathbb{R}^n\) has Lebesgue measure zero.
Any continuous probability distribution on \(\mathbb{R}^n\) assigns probability zero to a measure-zero set, hence \(\mathbb{P}(x^T A x = 0) = 0\).
\end{proof}

\paragraph{Structure of the failure set.}
The vanishing set above can be described more concretely. Both \(P\) and
\(L^{-1} P L\) are rank-2 projections, so their difference \(A\) has rank at most
four: the value of \(x^T A x\) depends only on the components of \(x\) along at
most four particular directions in \(\mathbb{R}^n\), and is identically zero
along the other \(n - 4\).

\section{Architecture and implementation details}
\label{app:architecture}

This appendix gives the full description of each stage of the architecture summarized in Section~\ref{sec:method} and Figure~\ref{fig:pipeline}.

\subsection{Stage 1: Learned filter pairs}

The orthogonal-operator model \(y(p) = L_p\, x(p)\) (Eq.~\eqref{eq:action}) defines the setting; \(L_p\) admits a canonical decomposition into \(2 \times 2\) rotation blocks that organize our generalization of the per-frequency structure of phase correlation. Because the invariant subspaces of a general orthogonal \(L_p\) depend on \(L_p\) itself, the fixed Fourier basis used by classical phase correlation is replaced with two banks of \(C\) learnable filter pairs, \(\psi_k\) and \(\phi_k\) for \(k = 1, \ldots, C\), applied to the moving and fixed images via strided convolution (each in \(\mathbb{R}^{n \times 2}\)). The sole architectural constraint is unit \(\ell_2\) norm on each filter; orthonormality within a pair and alignment of its span with an invariant subspace of \(L_p\) are not imposed architecturally; they emerge from training against the task loss. Figure~\ref{fig:overview} shows representative learned filter pairs on cardiac MRI.

Under the ideal structure (\(\mathrm{span}(\psi_k)\) invariant under \(L_p\), pair orthonormal), the restriction \(L_p|_{\mathrm{span}(\psi_k)}\) is a planar rotation \(R(\theta_k(p)) \in SO(2)\). Each filter pair plays the role of a single classical frequency: it identifies a 2-D subspace on which the local transformation acts as a rotation by an angle \(\theta_k(p)\). \citet{memisevic2010factored} established empirically that filter pairs in the factored bilinear architecture converge to Fourier harmonic pairs when trained on pure-translation data, so the Fourier-filter limit case in our reduction is realized by training rather than imposed.

\subsection{Stage 2: Bilinear interaction on filter responses}

Classical phase correlation forms the cross-power spectrum \(\hat y(k) \cdot \overline{\hat x(k)}\) by multiplying per-frequency Fourier coefficients. Under our modeling assumption, the analogous operation is a \emph{bilinear interaction on the filter-response subspace}, evaluated independently at each coarse-grid location \(p\). Let \(x_p, y_p \in \mathbb{R}^n\) denote the moving and fixed patches centered at \(p\). The per-subspace bilinear interaction is
\begin{equation}
z_k(p) \;=\; \begin{pmatrix}
(\psi_k^T x_p)^T \, (\phi_k^T y_p) \\[2pt]
(\psi_k^T x_p)^T J\, (\phi_k^T y_p)
\end{pmatrix},
\qquad J = \begin{pmatrix} 0 & -1 \\ 1 & 0 \end{pmatrix},
\label{eq:interaction}
\end{equation}
where \(\psi_k^T x_p, \phi_k^T y_p \in \mathbb{R}^2\) are the projections of the moving and fixed patches onto their respective filter pairs. Each component of \(z_k(p)\) is a bilinear form in \((\psi_k^T x_p, \phi_k^T y_p)\): the symmetric inner product, and the skew-symmetric inner product induced by the planar 90-degree rotation \(J\). The pair-product is computed entirely in the encoded filter-response space and never returns to image space; this is what we refer to as bilinear interaction.

Under \(\mathbb{R}^2 \cong \mathbb{C}\) (encoding \(v = (v_1, v_2) \in \mathbb{R}^2\) as \(v_1 + i v_2 \in \mathbb{C}\)), \(z_k\) is \(\overline{\psi_k^T x_p} \cdot \phi_k^T y_p\), the real-vector form of the classical cross-power product, and its two components are proportional to \((\cos \theta_k, \sin \theta_k)\). The decoder receives a polar factorization: unit-circle direction (the per-subspace rotation angle) and magnitude separately \((\cos\theta_k, \sin\theta_k, |z_k|)\). Magnitude normalization inherits the classical phase-correlation cancellation of intensity bias; keeping magnitude as a separate channel lets downstream weighting attenuate a subspace's contribution without corrupting its angle estimate.

\subsection{Stage 3: Positional-encoding decoder}

Classical phase correlation completes the pipeline by inverse-transforming the normalized cross-power spectrum to produce a 2-D plane with a Dirac peak at the displacement, then localizing that peak by \(\arg\max\). We instead interpret the displacement vector as integrating the product of the nearly-Dirac peak and the coordinate grid. The decoder implements this interpretation through two intermediate spaces: a \emph{gate space} \(\mathbb{R}^G\) in which per-pair features and positional encodings interact multiplicatively, and the \emph{PE space} \(\mathbb{R}^{n_\mathrm{pe}}\) in which the displacement residual lives. In our released configurations the gate-space dimension equals the filter-pair count (\(G = C\)), though the two spaces are structurally distinct: \(\mathbb{R}^C\) is indexed by filter pair, while \(\mathbb{R}^G\) is the space in which pair features and positional encodings combine.

Let \(\boldsymbol{\gamma} \in \mathbb{R}^{n_\mathrm{pe}}\) denote a NeRF-styled sinusoidal positional encoding of the coordinate grid \citep{mildenhall2020nerf}. The masked bilinear interaction at the current coarse-grid location yields a per-pair polar feature vector \(z \in \mathbb{R}^{3C}\), stacking the triple \((\cos\theta_k, \sin\theta_k, |z_k|)\) over \(k = 1, \ldots, C\). A two-layer \(1\!\times\!1\) MLP \(\mathrm{MLP}_z : \mathbb{R}^{3C} \to \mathbb{R}^G\) with hidden width \(B\) and a LeakyReLU between layers projects \(z\) into the gate space, while a single learned linear map \(W_\gamma \in \mathbb{R}^{G \times n_\mathrm{pe}}\) lifts \(\boldsymbol{\gamma}\) into the same space; the two are multiplied element-wise (resembling the delta peak's product with the coordinate grid in classical phase correlation), a second MLP maps the gated vector back to PE space, and a final readout produces the Cartesian displacement:
\begin{equation}
\begin{aligned}
m \;&=\; \mathrm{MLP}_z(z),                                                                 & \boldsymbol{\gamma}_g \;&=\; W_\gamma\, \boldsymbol{\gamma}, \\
\delta\boldsymbol{\gamma} \;&=\; \mathrm{MLP}_\mathrm{pe}\bigl(\, m \odot \boldsymbol{\gamma}_g \,\bigr) \;-\; \boldsymbol{\gamma},  & \delta\varphi \;&=\; \mathrm{head}\bigl(\delta\boldsymbol{\gamma}\bigr),
\end{aligned}
\label{eq:decoder}
\end{equation}
where \(m, \boldsymbol{\gamma}_g \in \mathbb{R}^G\), \(\delta\boldsymbol{\gamma} \in \mathbb{R}^{n_\mathrm{pe}}\), \(\odot\) is element-wise multiplication, and \(\mathrm{MLP}_\mathrm{pe} : \mathbb{R}^G \to \mathbb{R}^{n_\mathrm{pe}}\) is a two-layer \(1\!\times\!1\) net with hidden width \(G\) and a LeakyReLU between layers (Table~\ref{tab:impl_arch}). The residual \(\delta\boldsymbol{\gamma}\) is the displacement representation in PE space; \(\mathrm{head}\) maps it to a Cartesian displacement vector. The canonical readout is a single linear \(\mathrm{Conv}_{1\times 1}(n_\mathrm{pe}, 2)\); in this paper we use a depth-1 mini-UNet of the same input/output shape with \(3\!\times\!3\) spatial kernel size, which couples neighboring spatial indices for additional regularization.

\subsection{Training objective}

All checkpoints are trained unsupervised against a similarity loss between the warped moving and fixed images, plus a smoothness regularizer on the displacement field and a Jacobian-determinant fold penalty for topology preservation. The full loss specification is reported per benchmark in Table~\ref{tab:impl_train}. The ODE integration scheme used at inference is the one defined by Eq.~\eqref{eq:ode}.

\subsection{Implementation details}
\label{app:implementation}

The conceptual architecture is described in the preceding subsections of this appendix. Both released checkpoints (ACDC, CAMUS) instantiate the same UNet decoder architecture and differ only in a small number of capacity, regularization, and schedule choices appropriate to each modality. Across both benchmarks the encoder is a normalized convolution producing two banks of $C$ filter pairs at coarse-grid stride $s$; per-pair features and positional encodings interact in a gate space of dimension $G$ before being mapped back to the PE space and read out as a Cartesian displacement. The depth-1 mini-UNet uses a hidden channel dimension of \(16\). Reported numbers are per-pair test-set mean $\pm$ 1-$\sigma$ sample standard deviation from seed 42; cross-seed stability across seeds 42, 43, 44 is reported in the per-benchmark appendices.

\begin{table}[h]
\centering
\small
\setlength{\tabcolsep}{8pt}
\caption{Architecture hyperparameters per benchmark.}
\label{tab:impl_arch}
\begin{tabular}{lcc}
\toprule
                                     & ACDC          & CAMUS \\
\midrule
Image size                           & $128\times128$ & $128\times128$ \\
Encoder kernel $k_f$                 & $16$           & $64~\&~32$ \\
Encoder stride $s$                   & $4$            & $8~\&~4$ \\
Coarse-grid resolution               & $32\times32$   & $32\times32$ \\
Filter-pair bank $C$                 & $128$          & $128$ \\
Top-$K$ mask                         & $64$           & $64$ \\
Gate-space dim $G$                   & $128$          & $128$ \\
Encoder MLP hidden $B$               & $128$          & $128$ \\
Positional-encoding channels $n_\mathrm{pe}$ & $256$  & $256$ \\
Integration steps $n_t$              & $10$           & $10$ \\
\bottomrule
\end{tabular}
\end{table}

\begin{table}[h]
\centering
\small
\setlength{\tabcolsep}{8pt}
\caption{Loss and optimization hyperparameters per benchmark. Loss is $\mathcal{L} = \mathcal{L}_\mathrm{sim} + \lambda_\mathrm{reg}\mathcal{L}_\mathrm{reg} + \lambda_J\mathcal{L}_{|J|} + \lambda_{\log J}\mathcal{L}_{\log J}$. All checkpoints use Adam.}
\label{tab:impl_train}
\begin{tabular}{lcc}
\toprule
                                       & ACDC                        & CAMUS \\
\midrule
$\mathcal{L}_\mathrm{sim}$             & NCC (window $17$)           & NCC (window $67$) \\
Regularizer $\mathcal{L}_\mathrm{reg}$ & $\|\nabla\varphi\|^2$       & $\|\nabla\varphi\|^2$ \\
$\lambda_\mathrm{reg}$                 & $5\!\times\!10^{-2}$        & $2\!\times\!10^{-1}$ \\
$\lambda_J$ (Jac.\ det.)               & $100$                       & $1$ \\
$\lambda_{\log J}$                     & $10^{-5}$                   & $10^{-2}$ \\
Learning rate                          & $10^{-3}$                   & $10^{-3}$ \\
LR schedule                            & exp.\ $\gamma\!=\!0.997$    & exp.\ $\gamma\!=\!0.996$ \\
Batch size                             & $50$                        & $50$ \\
Epochs                                 & $400$                       & $300$ \\
Weight decay                           & $0$                         & $10^{-5}$ \\
Gradient clip (norm)                   & off                         & off \\
Augmentation                           & h/v flip w/ $p\!=\!0.5$     & none \\
Pairs per epoch                        & full train set              & full train set \\
Seeds reported                         & $42, 43, 44$                & $42, 43, 44$ \\
\bottomrule
\end{tabular}
\end{table}

\begin{table}[h]
\centering
\small
\setlength{\tabcolsep}{6pt}
\caption{Per-benchmark dataset protocol. Splits and pre-processing follow the original published protocols.}
\label{tab:impl_protocol}
\begin{tabular}{lcc}
\toprule
                                & ACDC                            & CAMUS \\
\midrule
Source protocol                  & \citet{jian2025disentangling}    & \citet{zhang2024adacs} \\
Patient split (train/val/test)  & $80 / 20 / 50$                  & $300 / 100 / 100$ \\
Pairs                            & ES--ED, both directions          & ES--ED, both 2CH \& 4CH views \\
In-plane resampling              & as in source                     & $1.0$\,mm \\
Crop / canvas                    & $128\!\times\!128$ ROI on seg.\ bbox & $128\!\times\!128$ center-crop \\
Evaluation metric                & Dice (RV / Myo / LV)            & LV-myocardium Dice \\
\bottomrule
\end{tabular}
\end{table}

\section{Per-benchmark protocol details}
\label{app:protocols}

This appendix gives data-, loss-, and evaluation-specific details for each of the three benchmarks. Architecture, optimizer, and per-benchmark hyperparameters are summarized in Tables~\ref{tab:impl_arch}--\ref{tab:impl_protocol}; the prose below specifies what those tables do not (split rules, evaluation metric definitions, cross-seed stability, comparison caveats). All datasets are publicly distributed for research use: ACDC \citep{bernard2018acdc} and CAMUS \citep{leclerc2019camus} via CREATIS (free registration), OASIS-1 \citep{marcus2007oasis} under the OASIS Data Use Agreement.

\subsection{Cardiac MRI (ACDC)}
\label{app:acdc-protocol}

\paragraph{Data and split (\citet{jian2025disentangling} \S4.1.3).}
We use all 150 labeled ACDC subjects, split 80 train / 20 validation / 50 test at the patient level (seed 2023), yielding 506 / 190 / 194 annotated 2-D short-axis slices respectively (\citep{jian2025disentangling} reports 194 test pairs, but we obtained 394 which is more consistent with the split ratios and assume their test split is a subset of ours). We keep slices in which both ED and ES segmentations contain all three structures (RV, myocardium, LV), reproducing \citet{jian2025disentangling}'s reported train+val slice count of 506 + 190 = 696. Each slice is cropped to a 128\(\times\)128 region of interest centered on the union of the ED/ES segmentation bounding boxes and intensities are min-max normalized to \([0, 1]\) per slice.

\paragraph{Bidirectional training.}
Each iteration performs both forward (ED\(\rightarrow\)ES) and backward (ES\(\rightarrow\)ED) registrations through the same model, with the total loss averaged across directions, matching \citet{jian2025disentangling} \S4.3.

\paragraph{Evaluation.}
Both directions are evaluated separately at test time. We report Dice over the three annotated structures (RV, myocardium, LV) per direction and as the across-direction average. SDlogJ (\(\times 10^{-2}\)) is the standard deviation of \(\log(\det J + 3)\) following \citet{jian2025disentangling}'s implementation, lower indicating smoother volume changes. NDV (\(/\)10\,k) is the rate of pixels with non-positive Jacobian determinant per 10{,}000 foreground voxels (foreground = union of moving and fixed segmentations), lower indicating more diffeomorphic warps.

\paragraph{Loss-control breakdown.}
Under the NCC recipe above, our masked-residual architecture reaches 0.843 Dice on ED\(\rightarrow\)ES, while VoxelMorph re-trained under the identical recipe reaches 0.817; the +0.026 gap is the architectural contribution. The same VoxelMorph re-training shows +0.029 NCC-over-MSE (0.788\(\rightarrow\)0.817), and our own architecture shows a comparable +0.024 NCC-over-MSE (0.819\(\rightarrow\)0.843). Both architectures gain similarly from NCC, so NCC does not asymmetrically favor our method.

\paragraph{Cross-seed stability (Table~\ref{tab:acdc_ncc}).}
The Table~\ref{tab:acdc_ncc} numbers are reported at seed 42. Across three independent training seeds (42, 43, 44) under the recipe above, the cross-seed standard deviations are 0.001, 0.003, and 0.002 on Dice (ED\(\rightarrow\)ES, ES\(\rightarrow\)ED, and the across-direction average respectively), 0.16 on SDlogJ, and 2.9 on NDV. The reported deltas over DWCP \citep{jian2025disentangling} therefore exceed cross-seed variability by an order of magnitude in each direction.

\subsection{Echocardiography (CAMUS)}
\label{app:camus-protocol}

\paragraph{Data and split.}
The CAMUS dataset \citep{leclerc2019camus} comprises 500 patients with paired apical two- and four-chamber views annotated at ED and ES with LV myocardium segmentation. We adopt the \citet{zhang2024adacs} protocol: a random 300/100/100 patient-level split (seed 42) yielding 600/200/200 train/val/test 2-D image pairs after including both 2CH and 4CH views per patient. Images are resampled to 1.0~mm in-plane and center-cropped to \(128 \times 128\).

\paragraph{Evaluation.}
ED\(\rightarrow\)ES Dice on LV myocardium, reported at seed 42, with cross-seed stability across seeds 42, 43, 44 characterised below. \citet{zhang2024adacs} does not report standard deviations for the Initial and Elastix baselines.

\paragraph{Cross-seed stability (Table~\ref{tab:camus}).}
The Table~\ref{tab:camus} number is reported at seed 42. Across three independent training seeds (42, 43, 44) under the recipe above, the LV-myocardium Dice values are 0.8204, 0.8190, and 0.8180 (cross-seed mean 0.8191, cross-seed standard deviation 0.001). The reported margin over VoxelMorph + AdaCS \citep{zhang2024adacs} is therefore within cross-seed variability, consistent with the parity claim in Section~\ref{sec:camus}.

\subsection{Quantum Harmonic Oscillator}
\label{app:qho}

This appendix supplements \S\ref{sec:qho} with the full architectural and training specification for the QHO experiment, a discussion of failure modes at large \(\tau\), and positioning relative to existing literature.

\paragraph{Common setup.}
Position grid \(x \in [-8, 8]\) with \(N = 128\) points; \(\hbar = \omega = m = 1\); the first \(n_{\max}=16\) Hermite-function eigenstates computed analytically via \texttt{scipy.special.hermite} and discretely L2-renormalized. Each training pair samples a random complex unit-norm coefficient vector \(c\in\mathbb{C}^{16}\), forms \(\Psi(0) = \sum_{n} c_n \psi_n^{\mathrm{true}}\), and time-evolves analytically eigenstate-by-eigenstate to obtain \(\Psi(\tau) = \sum_n c_n e^{-iE_n\tau/\hbar}\psi_n^{\mathrm{true}}\).

\begin{figure}[!ht]
\centering
\includegraphics[width=\linewidth]{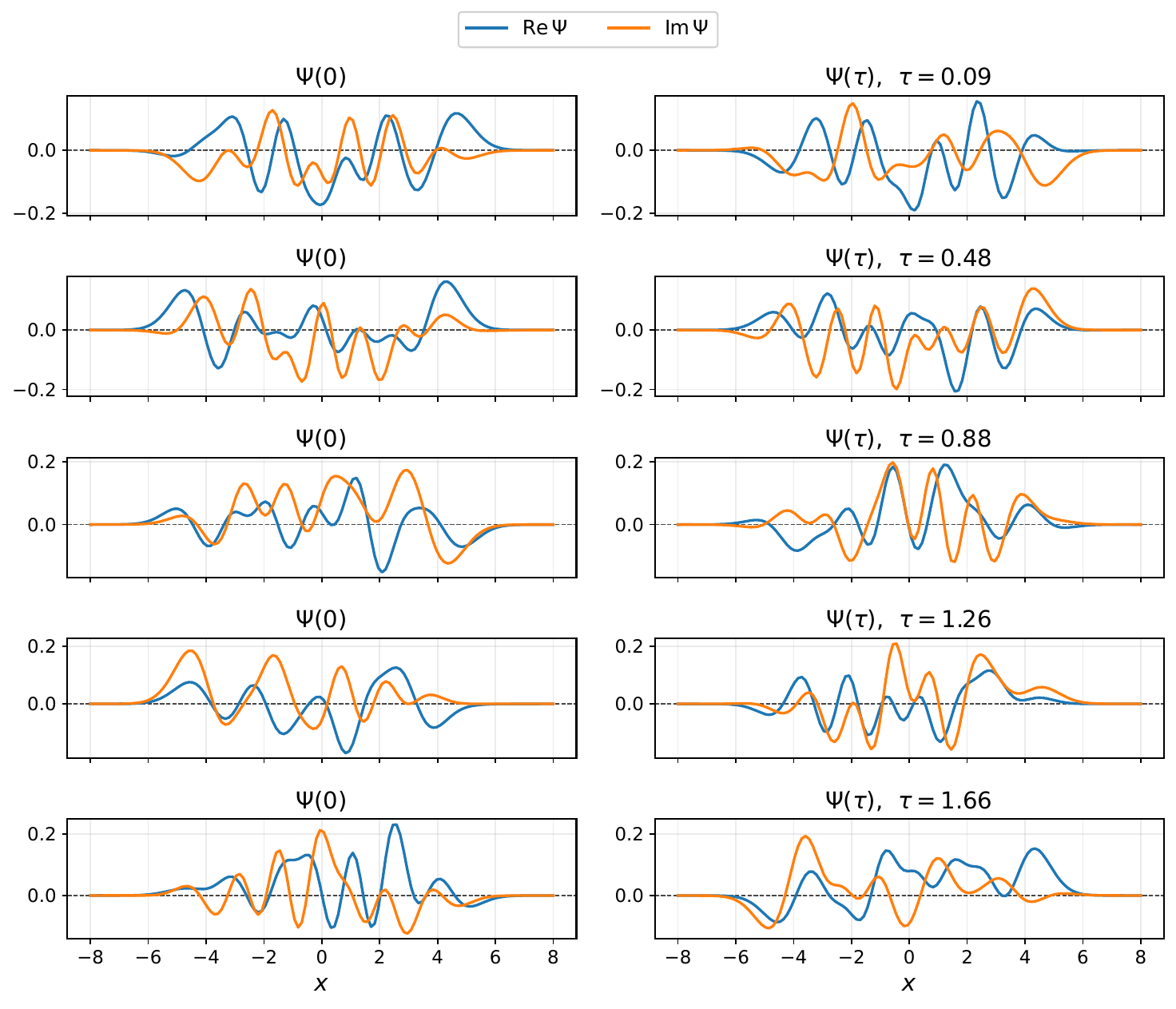}
\caption{Example training pairs \(\Psi(0)\) and \(\Psi(\tau)\): random complex unit-norm superpositions of the first $n_{\max}=16$ Hermite eigenstates, time-evolved analytically by a $\tau$ drawn uniformly from $[\tau_{\min}, \tau_{\max}]$. The model receives only the pair, not $\tau$ or the coefficients.}
\label{fig:qho_psi_pairs}
\end{figure}

\paragraph{Architecture and training.}
Sampler: $\tau_b$ drawn uniformly from $[\tau_{\min}, \tau_{\max}]$ per pair, not provided to the model; we report results at $\tau_{\max} \in \{0.6, 1.7\}$ (\S\ref{sec:qho}). Architecture: bank $\boldsymbol{\psi}_k \in \mathbb{R}^N$ ($K=16$, unit-normalized at use time); learned phase angles \(\delta_k\); per-filter learned rotation rate $E_k$ ($K$-vector); permutation-equivariant readout MLP $\mathcal{M}$ that ingests $\{(E_k, \cos\theta_k(b), \sin\theta_k(b), |z_k(b)|)\}_{k=1}^K$ per pair, pools across the filter axis with mean$+$max, and outputs $\hat\tau_b \in [\tau_{\min}, \tau_{\max}]$ via sigmoid range-scaling (hidden width 32, $\approx 3.3$k parameters). Reconstruction is closed-form in the diagonalized basis,
\[
\hat c_k(b) = c_k(\Psi_0^{(b)}) \cdot \exp\!\bigl(-i\,E_k\,\hat\tau_b\bigr),
\qquad
\widehat{\Psi}(\tau_b) = \sum_k \hat c_k(b)\,\boldsymbol{\psi}_k.
\]
Total parameters: $KN + K + |\mathcal{M}| = 2{,}048 + 16 + 3{,}329 \approx 5.4$k. Loss: magnitude-NCC reconstruction against $\Psi(\tau_b)$ + residual ($\mathcal{L}_{\mathrm{res}}$, Eq.~\eqref{eq:residual}) + diversity ($\mathcal{L}_{\mathrm{div}}$, off-diagonal Gram squared) + span coverage ($\mathcal{L}_{\mathrm{span}} = \mathbb{E}[\|\Psi\|^2 - \sum_k |c_k|^2]$); all weights $1.0$. Adam at $10^{-3}$, batch 64, 8{,}000 steps. Energies are recovered up to a single global gauge offset (magnitude-NCC's invariance to per-pair global phase), closed by a one-scalar variational anchor
\[
E_0^{\mathrm{calibrated}} = \langle\boldsymbol{\psi}_0^{\mathrm{learned}}|\widehat{H}|\boldsymbol{\psi}_0^{\mathrm{learned}}\rangle,
\qquad
\langle\psi|\widehat{H}|\psi\rangle = -i\,\partial_t\bigl\langle\psi\bigl|U(t)\bigr|\psi\bigr\rangle\bigr|_{t=0},
\]
read from the single-frequency oscillation of $\langle\psi_0^{\mathrm{learned}}|U(t)|\psi_0^{\mathrm{learned}}\rangle$ across pair times.

\begin{figure}[!ht]
\centering
\includegraphics[width=\linewidth]{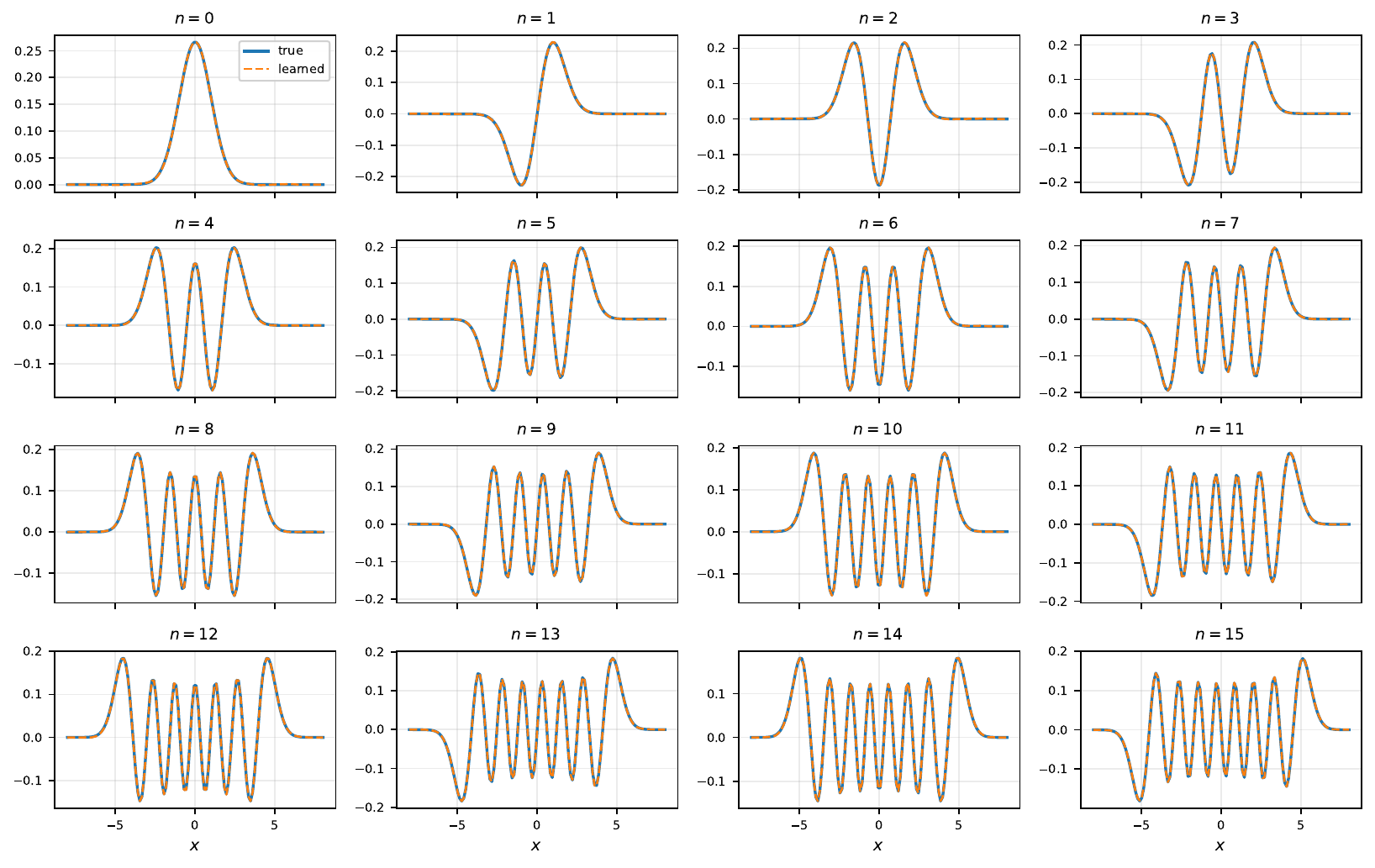}
\caption{Learned filters (orange dashed) overlaid on the analytic Hermite-function eigenstates (blue) for $n = 0, \ldots, 15$, sign-aligned. All 16 filters coincide with the analytic eigenstates (median overlap $1.000$).}
\label{fig:qho_eigenstates_taublind}
\end{figure}

\paragraph{Failure modes at large $\tau$.}
As $\tau_{\max}$ grows past $2\pi/E_{\max}$, the per-pair phase $E_n \tau_b/\hbar$ exceeds the principal branch and the reconstruction loss accepts any wrap count $E_n \tau_b/\hbar + 2\pi m$ as a valid solution. The per-filter rate $E_k$ provides per-filter robustness (each filter settles on a single consistent unwrap), but different filters in the bank can settle on different branches without mutual penalty, producing the spectrum fragmentation in Figure~\ref{fig:qho_eigenstates} (\S\ref{sec:qho}). Eigenstate recovery is unaffected: the structural primitive that identifies invariant subspaces (the residual) does not depend on $\tau$.

\paragraph{Position vs. existing literature.}
Neural Phase Correlation (NPC) recovers the eigenstructure of a unitary operator directly from observation pairs: a single trained model returns the entire spectrum, never sees the Hamiltonian, and does not require the per-pair time interval $\tau$ to be observed or binned. Koopman-net learning \citep{lusch2018deep, takeishi2017koopman} instead approximates the operator on a learned latent rather than recovering the eigenstructure directly. Neural-quantum-state variational Monte Carlo \citep{carleo2017solving} requires the Hamiltonian and recovers one eigenstate per training run. Physics-informed eigenvalue learning \citep{jin2022pinn} and Hamiltonian learning via neural ODEs \citep{kao2024hamiltonianlearning} both touch $H$ explicitly during training. Classical dynamic mode decomposition \citep{schmid2010dmd} recovers the spectrum from snapshot pairs at a single fixed $\tau$; the bilinear-interaction residual's $\tau$-invariance is what lets NPC operate without per-pair $\tau$ alignment, where DMD-style methods would require per-$\tau$ binning.

\section{ACDC ablations}
\label{sec:scaling}
\label{app:scaling}

\subsection{Residual-masking sweep}
\label{app:scaling-ksweep}

\input{figures/figure_ksweep_metrics.tex}

We ablate the residual-masking mechanism by varying the top-\(K\) mask budget at fixed filter-bank capacity \(C = 128\), training each variant under our preferred recipe from Appendix~\ref{app:acdc-protocol} (NCC similarity, diffusion regularization, Jacobian-determinant fold penalty; seed 43, batch size 100 for \(C = 128\) and 50 for \(C \geq 256\) due to memory, otherwise identical to the Table~\ref{tab:acdc_ncc} protocol). The sweep covers \(K \in \{1, 8, 32, 64, 96, 128\}\); \(K = 128\) recovers the unmasked baseline in which every pair contributes unconditionally. Figure~\ref{fig:ksweep_metrics} reports Dice as a function of \(K\) (left) and the corresponding learned filter banks at each \(K\) (right).

\paragraph{Dice peaks}
Pooled Dice rises from \(K = 1\) (\(0.831\)) through \(K = 64\) (\(0.857\)) and then declines as \(K\) approaches \(C\), ending at \(0.841\) unmasked. The drop from peak to unmasked is \(0.016\), more than \(5\times\) the cross-seed standard deviation reported in Appendix~\ref{app:acdc-protocol}. Both directions follow the same shape: ED\(\rightarrow\)ES gains \(0.024\) from unmasked to peak (\(0.819 \to 0.834\)), ES\(\rightarrow\)ED gains \(0.019\) (\(0.862 \to 0.881\)). Pulling the full bank into the gradient does not just stop helping, it actively hurts, because the high-residual pairs contribute mis-aligned phase signal at inference.


\paragraph{\(K = 1\) is the most informative point.}
A single residual-selected pair already delivers \(0.831\) Dice, within \(0.010\) of the unmasked \(C = 128\) baseline. Even one specialist captures most of the registration signal, providing direct evidence that the residual is selecting the right pair (the one whose span is most invariant under the local transformation), not merely providing a generic sparsity bottleneck. This connects to the necessity-and-sufficiency result in Appendix~\ref{app:residual}: \(r_k^2 = 0\) iff \(\mathrm{span}(\boldsymbol{\psi}_k)\) is invariant under \(L_p\), so masking by residual selects pairs with the theoretical guarantee, not a heuristic proxy.

\paragraph{Bank-level diagnostic.}
The shape transition in Figure~\ref{fig:ksweep_metrics} (right) mirrors the per-pair residual distribution diagnostic of Figure~\ref{fig:residual_dist} (main text). The \(K\)-sweep makes the supervision-incentive picture from Section~\ref{sec:method} explicit: as \(K\) shrinks, the supervision concentrates on a tighter specialist set, producing the cleaner Gabor structure visible in Figure~\ref{fig:ksweep_metrics} (right). Visual inspection reveals an increase in the noise and granularity of the filter pairs (specialists \(\to\) generalists) for increasing \(K\), except for the apparent lucidity at \(K = 64\) which coincides with the sharp Dice peak.

\paragraph{Capacity scaling.}
Doubling the bank to \(C = 256\) leaves the optimal mask budget within the same regime, peaking at \(K = 192\) with Pooled Dice \(0.865\), only \(0.008\) above the \(C = 128\) peak (Figure~\ref{fig:ksweep_metrics}, purple diamonds). Quadrupling to \(C = 512\) at \(K = 256\) returns \(0.862\), no better than the unmasked \(C = 256\) limit. The peak-then-decay shape persists at \(C = 256\), with the \(K = 256\) endpoint dipping back to the unmasked level (\(0.859\)). Capacity beyond \(C = 128\) is therefore not the binding constraint at this benchmark; specialization through a residual-selected subset is.

\subsection{ODE step count}
\label{app:scaling-ode}

\input{figures/figure_ode_steps.tex}

We sweep the number of integration steps used by the displacement-field ODE while holding all other hyperparameters fixed at the \(C = 128\), \(K = 64\) baseline of Appendix~\ref{app:acdc-protocol}.

\paragraph{Plateau onset.}
Pooled Dice rises sharply from a single Euler step (\(0.840\)) to ODE\,4 (\(0.862\)) and then plateaus through ODE\,13 (\(0.859\)): the spread across the plateau (\(0.005\)) sits within the cross-seed standard deviation reported in Appendix~\ref{app:acdc-protocol}. The default of ODE\,10 used throughout the paper is inside this plateau, not at its edge.

\paragraph{Inference-cost implication.}
The plateau means that any step count between \(4\) and \(13\) is statistically equivalent on Pooled Dice. ODE\,4 attains the same accuracy as the default at roughly \(0.4\times\) the per-pair inference cost (Appendix~\ref{app:scaling-compute}); the choice of ODE\,10 in the released checkpoints reflects training-time conservatism rather than an accuracy requirement.

\subsection{Compute profile}
\label{app:scaling-compute}

\paragraph{Setup.}
We profile the default \(C = 128\), \(K = 64\), ODE\,10 model on a single NVIDIA A100-SXM4-40GB at batch size 1 and input resolution \(128 \times 128\), using \texttt{fvcore} for FLOPs and 100 forward passes (20 warmup) for wall-clock.

\paragraph{Parameters, FLOPs, and wall clock.}
The model has \(0.32\)M trainable parameters and \(3.29\) GFLOPs per forward pass. Mean per-pair inference time is \(75.7 \pm 1.7\) ms (median \(75.3\) ms, \(95\)th percentile \(77.8\) ms). Combined with the ODE-step plateau (Appendix~\ref{app:scaling-ode}), reducing to ODE\,4 brings the per-pair cost to approximately \(30\) ms with no measurable Dice cost; this is the operating point we recommend for inference-throughput-bound deployment.

\paragraph{Training compute.}
All training runs in the paper were performed on a single NVIDIA A100-SXM4-40GB GPU.

\section{Deformation breadth}
\label{app:breadth}

To verify that NPC can operate across a wide diversity of deformation regimes, we report two additional experiments beyond the cardiac MRI and echocardiography benchmarks of Section~\ref{sec:experiments}: a qualitative demonstration on 2D OASIS brain MRI (Section~\ref{app:oasis}) and a quantitative transfer to sub-meter synthetic-aperture-radar imagery (Section~\ref{app:sar}), the latter compared against the classical phase-correlation baseline.

\subsection{2D OASIS brain MRI}
\label{app:oasis}

OASIS pairs differ from ACDC in tissue contrast, anatomical structure, and scale of deformation: cortical folds and ventricles dominate the alignment task rather than ventricular contraction.

\paragraph{Setup.}
We train a single-stage \(C = 128\), \(K = 64\) model on 2D mid-axial slices from the OASIS-1 dataset \citep{marcus2007oasis} using NCC similarity, diffusion regularization, and ODE integration. This section is a qualitative demonstration: we do not compare against OASIS-specific baselines.

\paragraph{Qualitative registration.}
On four held-out test pairs (Figure~\ref{fig:oasis_combined}a), the deformation grid contracts smoothly across ODE steps to align cortical folds and ventricles between moving and fixed slices. The warp tightens monotonically with integration time and remains diffeomorphic at \(t = 15\) without visible folding.

\paragraph{Learned filter bank.}
The learned filter bank shows the same within-pair coherence as on ACDC, with tighter spatial localization (Figure~\ref{fig:oasis_combined}b).

\subsection{Sub-meter SAR}
\label{app:sar}

High-resolution SAR-SAR co-registration is a current bottleneck for downstream remote-sensing tasks (change detection, deformation monitoring, time-series analysis), where sub-pixel alignment is required. The field currently relies on hand-crafted phase-correlation methods; no deep-learning architecture exists, and no public benchmark is available\citep{li2025remote}. We therefore construct our own dataset and benchmark against the de facto standard (global phase correlation).

\paragraph{Setup.}
Our dataset consists of \(256 \times 256\) geo-coded SAR image pairs at \(0.5\)~m resolution from locations throughout Europe, Asia, Australia, North America, and South America. We present results on held-out test pairs taken over the Puerto Maldonado region of the Peruvian Amazon.

\paragraph{Registration results.}
Across the 40 sampled pairs (Figure~\ref{fig:sar_combined}a), the network beats classical phase correlation on all but one pair, which we show in the figure, demonstrating that the model transfers to phase correlation's operational domain without framework modifications.

\paragraph{Cascade filter bank.}
The coarse-stage filters (Figure~\ref{fig:sar_combined}b, top two rows) develop broader, speckle-tolerant patterns; the fine-stage filters (bottom two rows) tighten to higher spatial frequencies, consistent with the coarse-to-fine training schedule.

\input{figures/figure_oasis_combined.tex}
\input{figures/figure_sar_combined.tex}

\subsection{Filter localization across regimes}

The three regimes span a spectrum of filter localization. SAR filters are not spatially localized (Figure~\ref{fig:sar_combined}b). ACDC filters concentrate moderately, on the scale of anatomical features (Figure~\ref{fig:ksweep_metrics}). OASIS filters are the most localized, with effective half-periods of approximately 4 pixels, at the spatial scale of individual cortical features (Figure~\ref{fig:oasis_combined}b).

\section{Limitations}
\label{app:limitations}

The framework assumes the local transformation \(L_p\) is approximately orthogonal: locally, on small patches, for image registration; exactly for unitary dynamics. Some classes of orthogonal transformations are not captured by a trivial extension of the current architecture. In particular, optical flow with non-local orthogonal structure (transformations that mix pixels at distant locations) requires a non-local cost-volume formulation rather than the per-location bilinear interaction used here.

For the QHO experiment, we use only the first 16 Hermite eigenstates with a matched bank of 16 filter pairs; the architecture's behavior at higher truncation orders is not characterized. The reconstruction loss is invariant to per-pair global phase, which causes filters to fragment across unwrap branches at large \(\tau_{\max}\) (Appendix~\ref{app:qho}) and limits the model's ability to recover the eigenenergies (and therefore the Hamiltonian) at large time intervals.

The QHO experiment is also a purely algebraic exercise: training pairs are generated by analytic time-evolution of complex superpositions. The setup is not physically realizable in its current form because quantum-state measurements are destructive (wavefunction collapse), so a physical experiment cannot directly observe \(\Psi(0)\) and \(\Psi(\tau)\) as the model requires.


\end{document}

%% file: figures/registration_examples_acdc/figure_registration_examples_acdc.tex
%

\begin{figure}[!ht]
\centering
\setlength{\tabcolsep}{2pt}
\renewcommand{\arraystretch}{1.05}

\begin{tabular}{@{}ccccc@{}}
moving & fixed & warped & warped $-$ fixed & deformation \\
\includegraphics[width=0.190\linewidth]{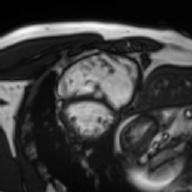} & \includegraphics[width=0.190\linewidth]{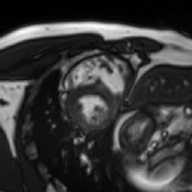} & \includegraphics[width=0.190\linewidth]{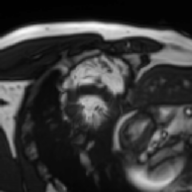} & \includegraphics[width=0.190\linewidth]{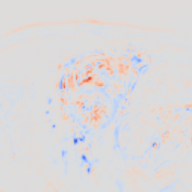} & \includegraphics[width=0.190\linewidth]{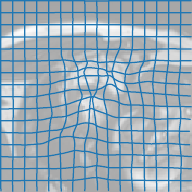} \\
\includegraphics[width=0.190\linewidth]{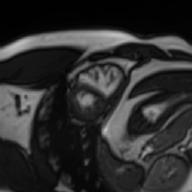} & \includegraphics[width=0.190\linewidth]{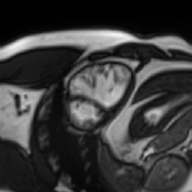} & \includegraphics[width=0.190\linewidth]{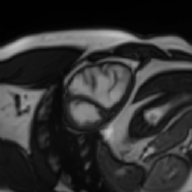} & \includegraphics[width=0.190\linewidth]{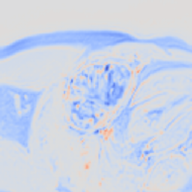} & \includegraphics[width=0.190\linewidth]{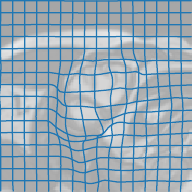} \\
\end{tabular}

\caption{Registration examples on ACDC test pairs. Columns: moving image; fixed image; warped moving image; signed difference (warped $-$ fixed) on a coolwarm scale; forward-displacement deformation field rendered as a deformed identity grid (the grid moves with the apparent image motion).}
\label{fig:registration_examples_acdc}
\end{figure}

%% file: figures/registration_examples_camus/figure_registration_examples_camus.tex
%

\begin{figure}[!ht]
\centering
\setlength{\tabcolsep}{2pt}
\renewcommand{\arraystretch}{1.05}

\begin{tabular}{@{}ccccc@{}}
moving & fixed & warped & warped $-$ fixed & deformation \\
\includegraphics[width=0.190\linewidth]{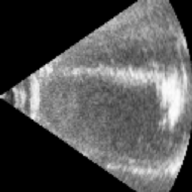} & \includegraphics[width=0.190\linewidth]{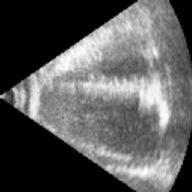} & \includegraphics[width=0.190\linewidth]{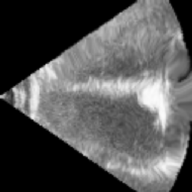} & \includegraphics[width=0.190\linewidth]{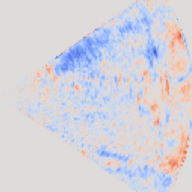} & \includegraphics[width=0.190\linewidth]{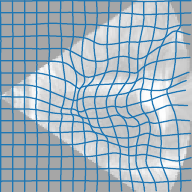} \\
\includegraphics[width=0.190\linewidth]{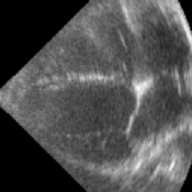} & \includegraphics[width=0.190\linewidth]{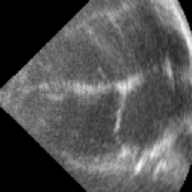} & \includegraphics[width=0.190\linewidth]{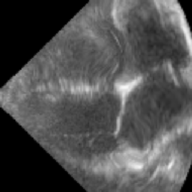} & \includegraphics[width=0.190\linewidth]{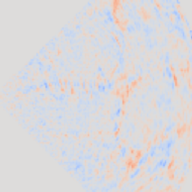} & \includegraphics[width=0.190\linewidth]{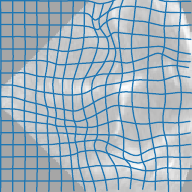} \\
\end{tabular}

\caption{Registration examples on CAMUS test pairs. Columns: moving image; fixed image; warped moving image; signed difference (warped $-$ fixed) on a coolwarm scale; forward-displacement deformation field rendered as a deformed identity grid (the grid moves with the apparent image motion).}
\label{fig:registration_examples_camus}
\end{figure}

%% file: figures/figure_ksweep_metrics.tex
%

\begin{figure}[!ht]
\centering
\begin{minipage}[c]{0.40\linewidth}
\centering
\includegraphics[width=\linewidth]{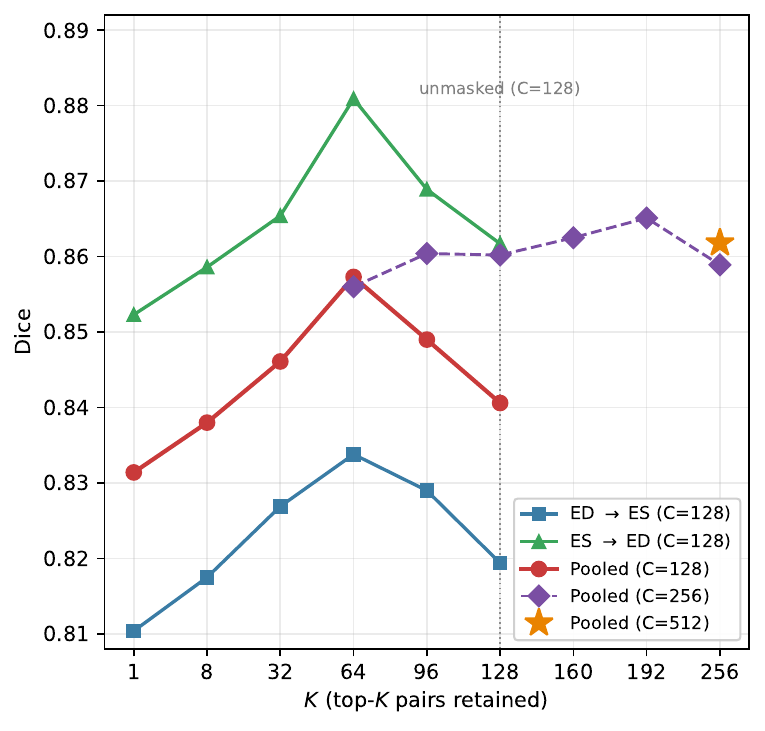}
\end{minipage}\hfill
\begin{minipage}[c]{0.58\linewidth}
\centering
\setlength{\tabcolsep}{1.5pt}
\renewcommand{\arraystretch}{1.05}
\begin{tabular}{@{}ccc@{}}
\subcaptionbox{$K = 1$\label{fig:ksweep_filters_k1}}{%
  \includegraphics[width=0.31\linewidth]{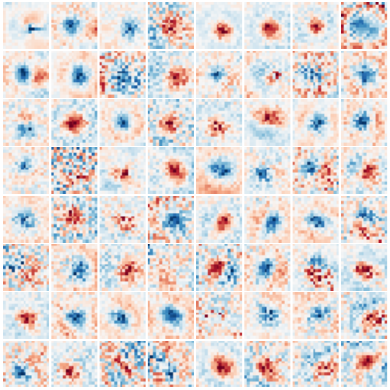}} &
\subcaptionbox{$K = 8$\label{fig:ksweep_filters_k8}}{%
  \includegraphics[width=0.31\linewidth]{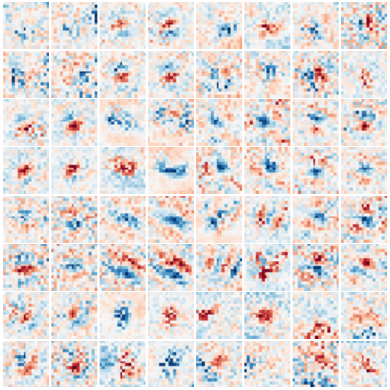}} &
\subcaptionbox{$K = 32$\label{fig:ksweep_filters_k32}}{%
  \includegraphics[width=0.31\linewidth]{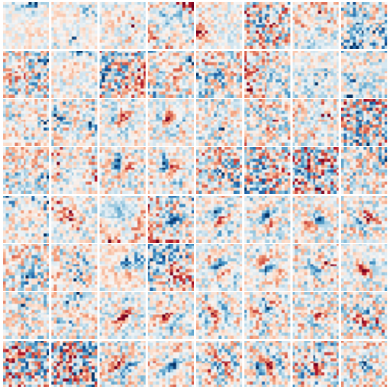}} \\[1pt]
\subcaptionbox{$K = 64$\label{fig:ksweep_filters_k64}}{%
  \includegraphics[width=0.31\linewidth]{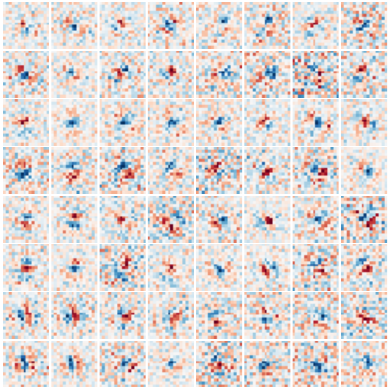}} &
\subcaptionbox{$K = 96$\label{fig:ksweep_filters_k96}}{%
  \includegraphics[width=0.31\linewidth]{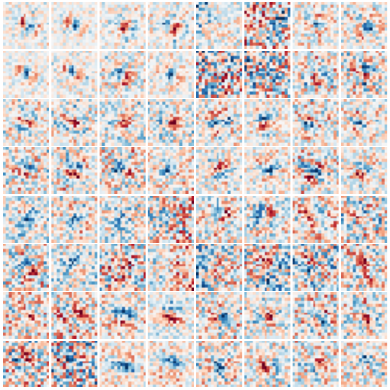}} &
\subcaptionbox{$K = 128$\label{fig:ksweep_filters_k128}}{%
  \includegraphics[width=0.31\linewidth]{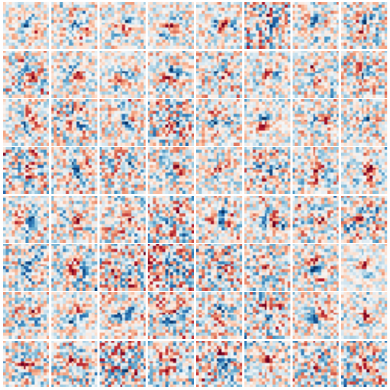}} \\
\end{tabular}
\end{minipage}
\caption{K-sweep ablation on ACDC. The Pooled (\(C = 128\)) curve forms a smooth dome with peak at \(K = 64\) (\(0.857\)), \(0.016\) above the unmasked limit at \(K = C = 128\). Purple diamonds overlay Pooled Dice at \(C = 256\) (bs=50) for \(K \in \{64, 96, 128, 160, 192, 256\}\); the highest Pooled Dice in the sweep is \(C = 256\), \(K = 192\) (\(0.865\)). The orange star marks \(C = 512\), \(K = 256\) (\(0.862\)). \textbf{Right:} 16 filter pair quads of each \(C = 128\) bank shown as 2\(\times\)2 quads of \((\boldsymbol{\psi}_{2k-1}, \boldsymbol{\phi}_{2k-1}, \boldsymbol{\psi}_{2k}, \boldsymbol{\phi}_{2k})\). Small \(K\) develops clean Gabor-like specialists with strong within-pair coherence; \(K\) approaching the unmasked limit produces noise-like generalists with no coherent within-pair structure.}
\label{fig:ksweep_metrics}
\end{figure}

%% file: figures/figure_ode_steps.tex
%

\begin{figure}[!ht]
\centering
\includegraphics[width=0.55\linewidth]{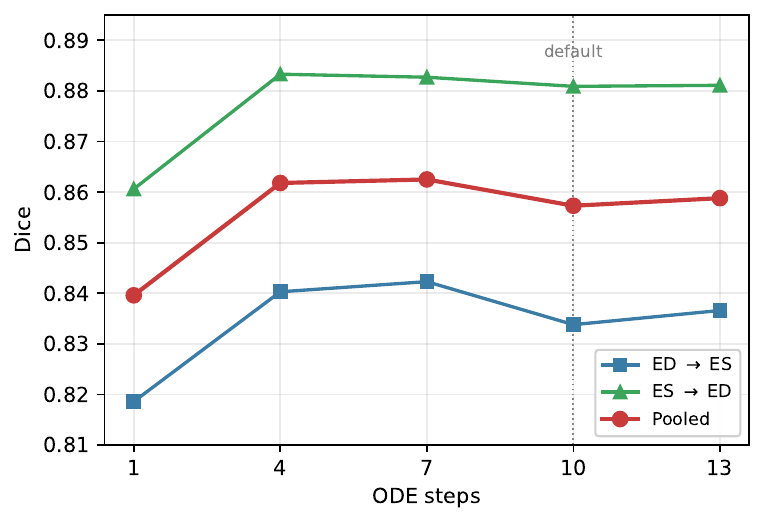}
\caption{ODE-step ablation on ACDC for the \(C{=}128\), \(K{=}64\) baseline. ODE\,10 is the default used throughout the paper (dotted vertical). Pooled Dice plateaus from ODE\,4 onward, and the spread between ODE\,4 and ODE\,13 (\(0.857\)\,--\,\(0.863\)) is within the cross-seed standard deviation reported in Appendix~\ref{app:acdc-protocol}.}
\label{fig:ode_steps}
\end{figure}

%% file: figures/figure_oasis_combined.tex
%

\begin{figure}[!ht]
\centering

\begin{subfigure}[t]{\linewidth}
\centering
\setlength{\tabcolsep}{1pt}
\renewcommand{\arraystretch}{0.65}
\begin{tabular}{@{}ccccc@{}}
\scriptsize Moving & \scriptsize $t=1$ & \scriptsize $t=8$ & \scriptsize $t=15$ & \scriptsize Fixed \\[0pt]
\includegraphics[width=0.135\linewidth]{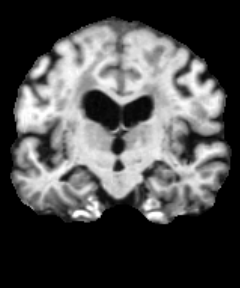} &
\includegraphics[width=0.135\linewidth]{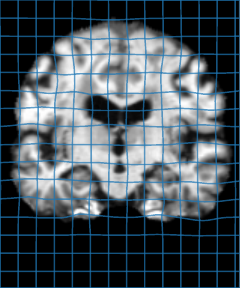} &
\includegraphics[width=0.135\linewidth]{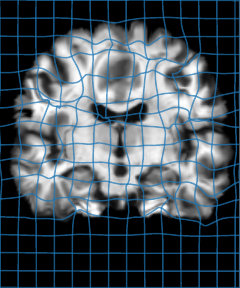} &
\includegraphics[width=0.135\linewidth]{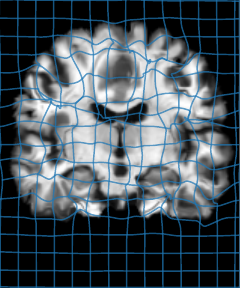} &
\includegraphics[width=0.135\linewidth]{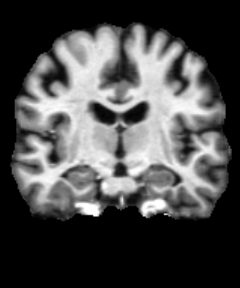} \\
\includegraphics[width=0.135\linewidth]{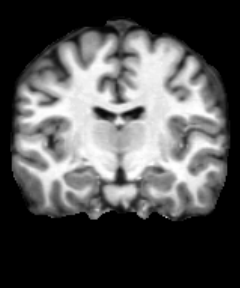} &
\includegraphics[width=0.135\linewidth]{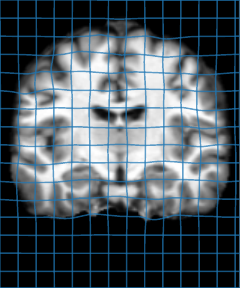} &
\includegraphics[width=0.135\linewidth]{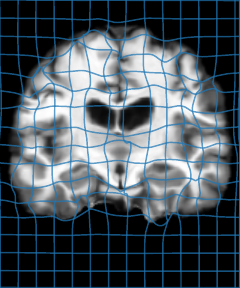} &
\includegraphics[width=0.135\linewidth]{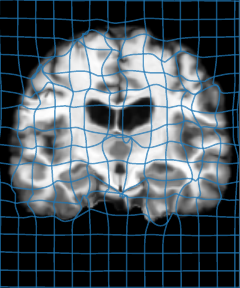} &
\includegraphics[width=0.135\linewidth]{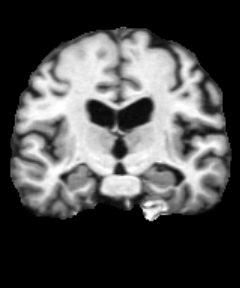} \\
\includegraphics[width=0.135\linewidth]{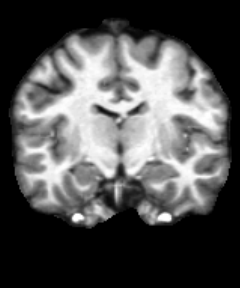} &
\includegraphics[width=0.135\linewidth]{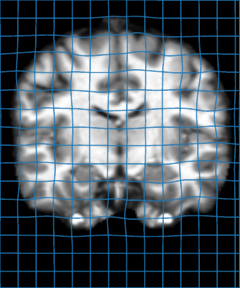} &
\includegraphics[width=0.135\linewidth]{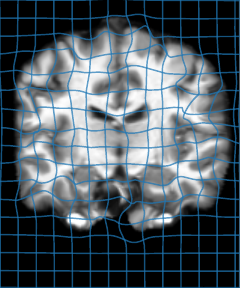} &
\includegraphics[width=0.135\linewidth]{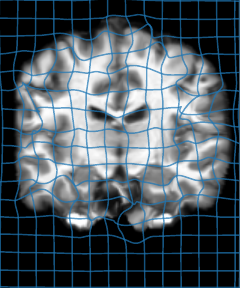} &
\includegraphics[width=0.135\linewidth]{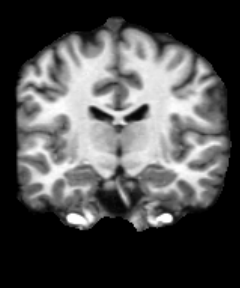} \\
\includegraphics[width=0.135\linewidth]{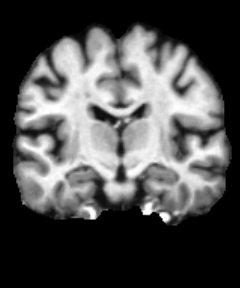} &
\includegraphics[width=0.135\linewidth]{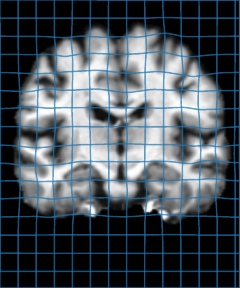} &
\includegraphics[width=0.135\linewidth]{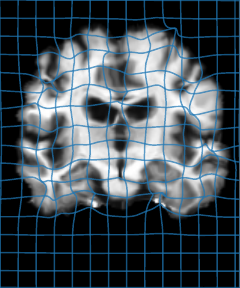} &
\includegraphics[width=0.135\linewidth]{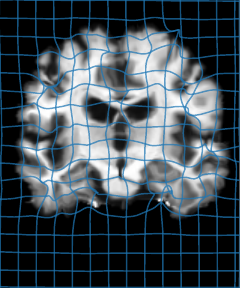} &
\includegraphics[width=0.135\linewidth]{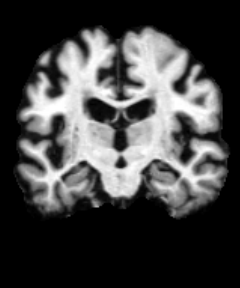} \\
\end{tabular}
\caption{Registration timelapse on four held-out test pairs. Columns: moving, intermediate warps at ODE steps $t \in \{1, 8, 15\}$ with deformation grid overlay, and fixed.}
\label{fig:oasis_timelapse}
\end{subfigure}

\vspace{4pt}

\begin{subfigure}[t]{\linewidth}
\centering
\setlength{\tabcolsep}{1pt}
\renewcommand{\arraystretch}{0.6}
\begin{tabular}{@{}cccc@{}}
\includegraphics[width=0.135\linewidth]{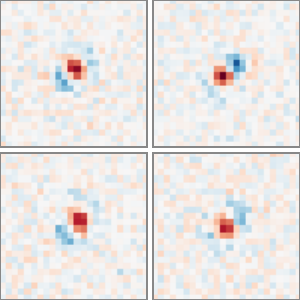} &
\includegraphics[width=0.135\linewidth]{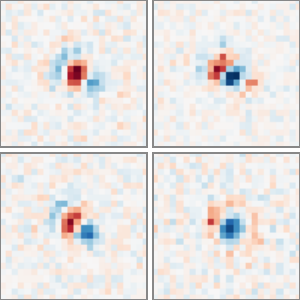} &
\includegraphics[width=0.135\linewidth]{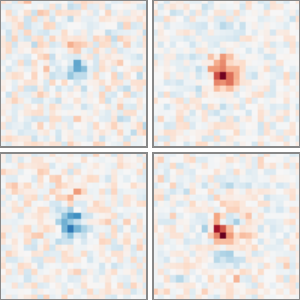} &
\includegraphics[width=0.135\linewidth]{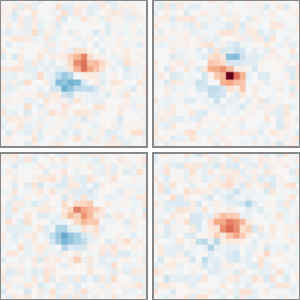} \\
\includegraphics[width=0.135\linewidth]{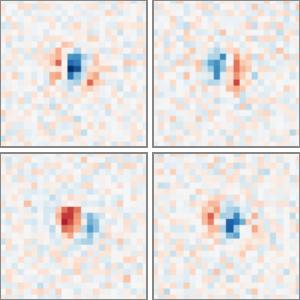} &
\includegraphics[width=0.135\linewidth]{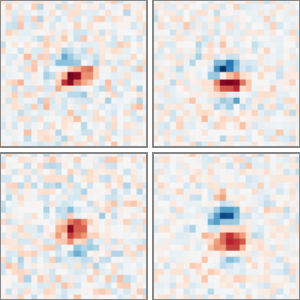} &
\includegraphics[width=0.135\linewidth]{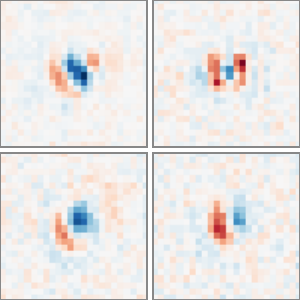} &
\includegraphics[width=0.135\linewidth]{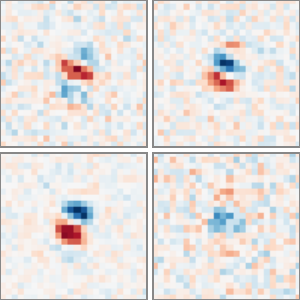} \\
\includegraphics[width=0.135\linewidth]{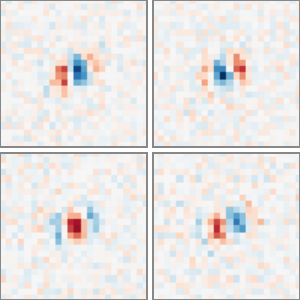} &
\includegraphics[width=0.135\linewidth]{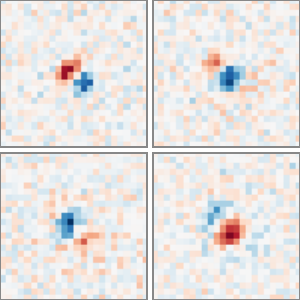} &
\includegraphics[width=0.135\linewidth]{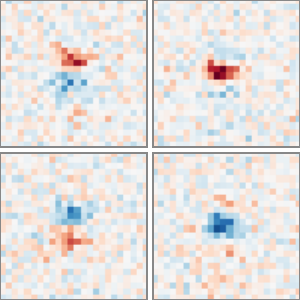} &
\includegraphics[width=0.135\linewidth]{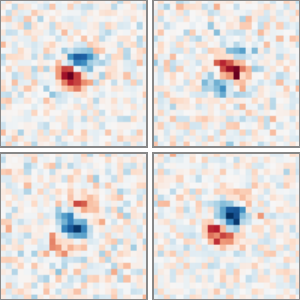} \\
\includegraphics[width=0.135\linewidth]{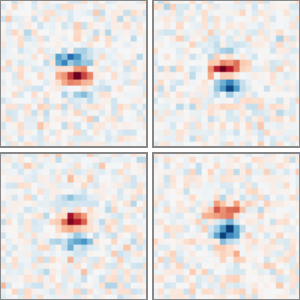} &
\includegraphics[width=0.135\linewidth]{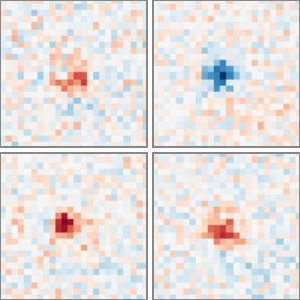} &
\includegraphics[width=0.135\linewidth]{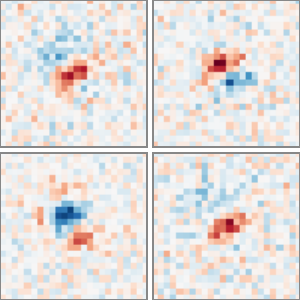} &
\includegraphics[width=0.135\linewidth]{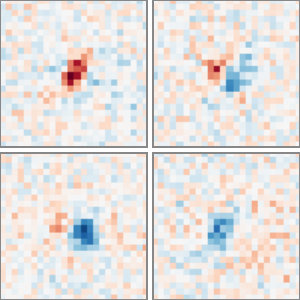} \\
\end{tabular}
\caption{Sixteen learned quad filter pairs from the OASIS bank. Each panel is a $2 \times 2$ quad of $(\boldsymbol{\psi}_{2k-1}, \boldsymbol{\phi}_{2k-1}, \boldsymbol{\psi}_{2k}, \boldsymbol{\phi}_{2k})$.}
\label{fig:oasis_quad_panels}
\end{subfigure}

\caption{OASIS 2D cross-domain demonstration. \textbf{(a)} Deformation grid contracts smoothly across ODE steps to align cortical folds and ventricles, remaining diffeomorphic at $t = 15$ without visible folding. \textbf{(b)} Learned filter bank exhibits the same spatially localized, within-pair coherent structure observed on ACDC (Figure~\ref{fig:ksweep_metrics}, right).}
\label{fig:oasis_combined}
\end{figure}

%% file: figures/figure_sar_combined.tex
%

\newcommand{\sarpanel}[1]{%
  \includegraphics[width=0.155\linewidth]{figures/sar_horizon/#1.pdf}}
\newcommand{\sarncc}[1]{\scriptsize\textsc{ncc}\,$=\,#1$}
\newcommand{\sarquad}[1]{%
  \includegraphics[width=0.135\linewidth]{figures/c2f_quad_panels/#1.pdf}}

\begin{figure}[!ht]
\centering

\begin{subfigure}[t]{\linewidth}
\centering
\setlength{\tabcolsep}{1.5pt}
\renewcommand{\arraystretch}{0.85}
\begin{tabular}{ccccc}
\scriptsize\textbf{Moving} & \scriptsize\textbf{Fixed} & \scriptsize\textbf{Before} & \scriptsize\textbf{Ours} & \scriptsize\textbf{Phase corr.} \\
\addlinespace[1pt]
   &   & \sarncc{0.002} & \sarncc{0.543} & \sarncc{0.482} \\
\sarpanel{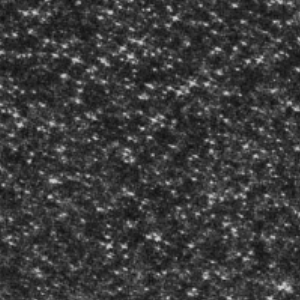} &
\sarpanel{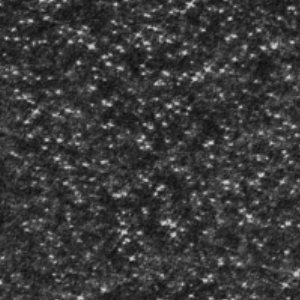} &
\sarpanel{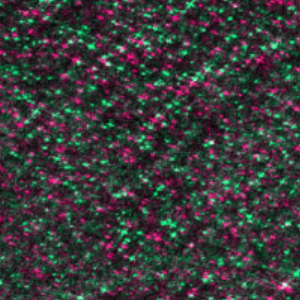} &
\sarpanel{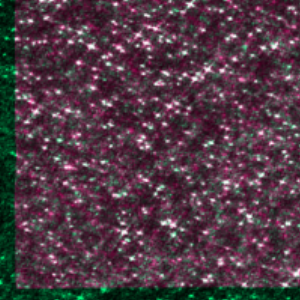} &
\sarpanel{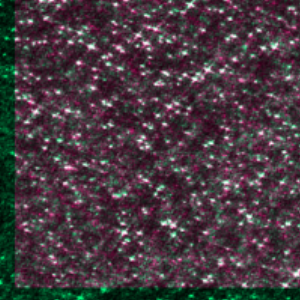} \\
\addlinespace[2pt]
   &   & \sarncc{0.075} & \sarncc{0.492} & \sarncc{0.408} \\
\sarpanel{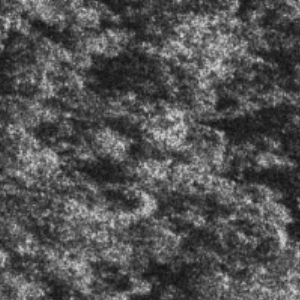} &
\sarpanel{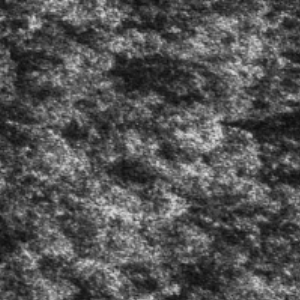} &
\sarpanel{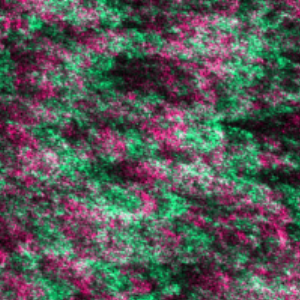} &
\sarpanel{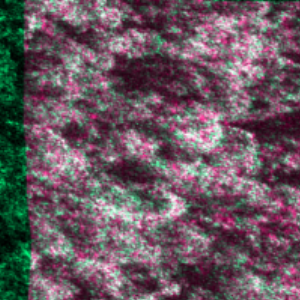} &
\sarpanel{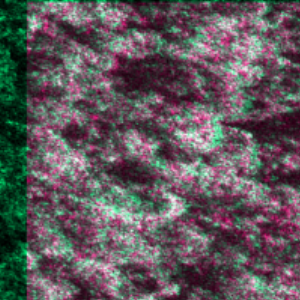} \\
\addlinespace[2pt]
   &   & \sarncc{0.505} & \sarncc{0.693} & \sarncc{0.704} \\
\sarpanel{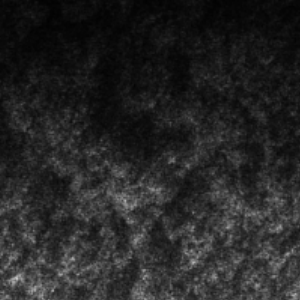} &
\sarpanel{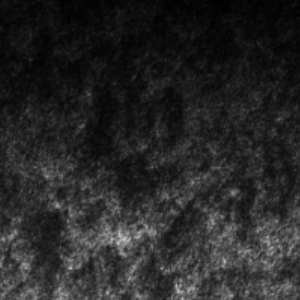} &
\sarpanel{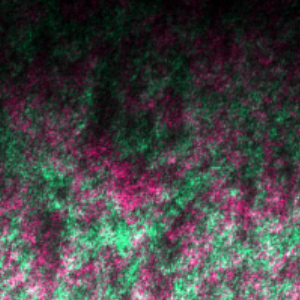} &
\sarpanel{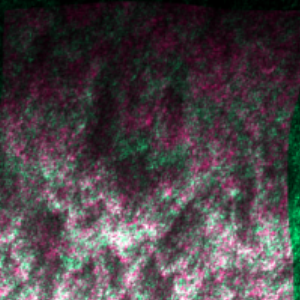} &
\sarpanel{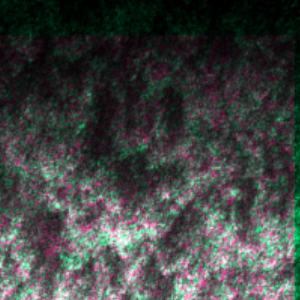} \\
\addlinespace[2pt]
   &   & \sarncc{0.430} & \sarncc{0.570} & \sarncc{0.473} \\
\sarpanel{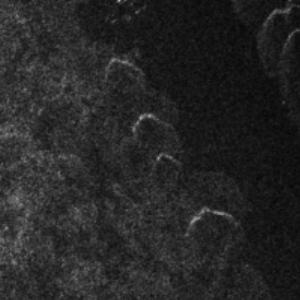} &
\sarpanel{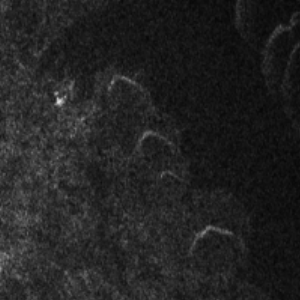} &
\sarpanel{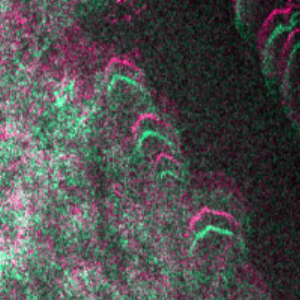} &
\sarpanel{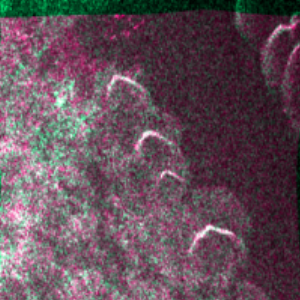} &
\sarpanel{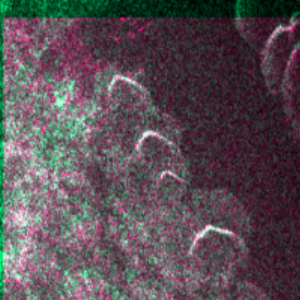} \\
\end{tabular}
\caption{SAR-to-SAR registration on Puerto Maldonado Umbra acquisitions (Feb24 $\to$ Mar20). Network NCC exceeds phase-correlation NCC on all but one sample (row 3, included to show the failure case).}
\label{fig:sar_horizon}
\end{subfigure}

\vspace{4pt}

\begin{subfigure}[t]{\linewidth}
\centering
\setlength{\tabcolsep}{1pt}
\renewcommand{\arraystretch}{0.6}
\begin{tabular}{@{}cccc@{}}
\sarquad{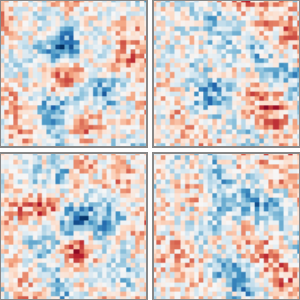} &
\sarquad{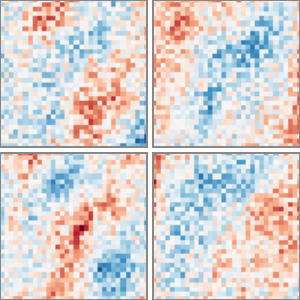} &
\sarquad{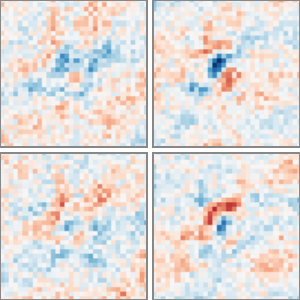} &
\sarquad{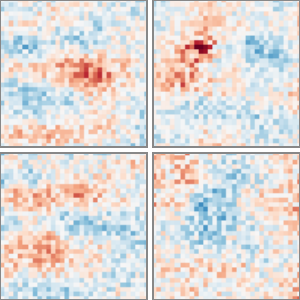} \\
\sarquad{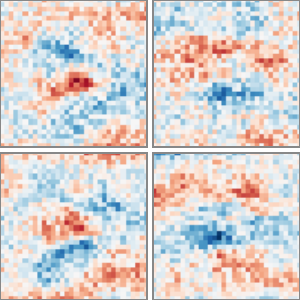} &
\sarquad{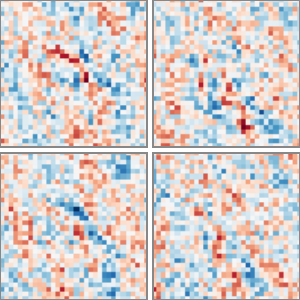} &
\sarquad{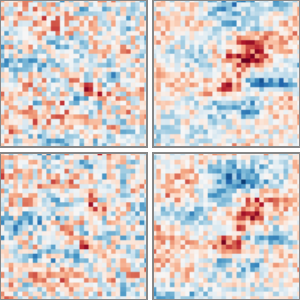} &
\sarquad{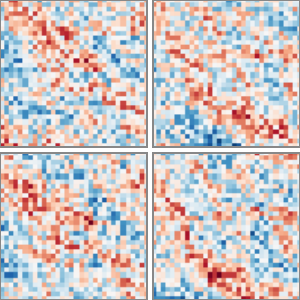} \\
\sarquad{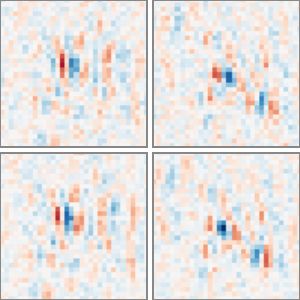} &
\sarquad{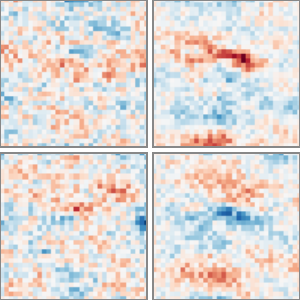} &
\sarquad{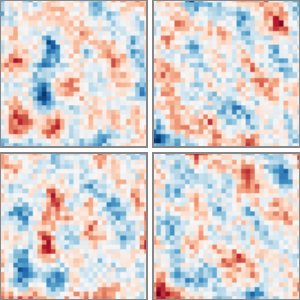} &
\sarquad{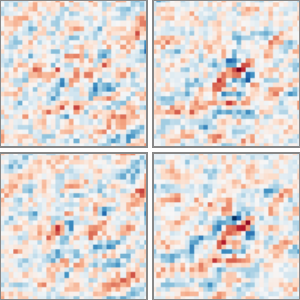} \\
\sarquad{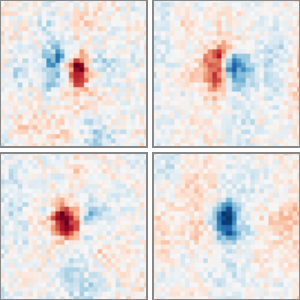} &
\sarquad{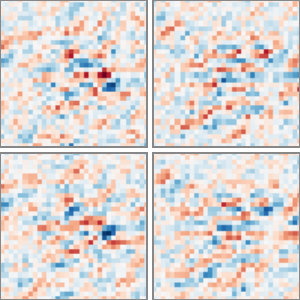} &
\sarquad{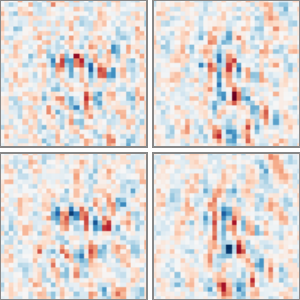} &
\sarquad{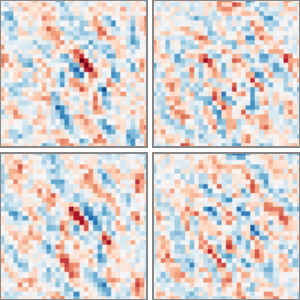} \\
\end{tabular}
\caption{Sixteen learned quad filter pairs from the cascade: coarse stage (top two rows) and fine stage (bottom two rows). Each panel is a $2 \times 2$ quad of $(\boldsymbol{\psi}_{2k-1}, \boldsymbol{\phi}_{2k-1}, \boldsymbol{\psi}_{2k}, \boldsymbol{\phi}_{2k})$.}
\label{fig:sar_quad_panels}
\end{subfigure}

\caption{SAR cross-domain transfer. \textbf{(a)} Registration results on four held-out Umbra image pairs. \textbf{(b)} Coarse- and fine-stage learned filter banks. The coarse stage develops broader, speckle-tolerant patterns; the fine stage tightens to higher spatial frequencies.}
\label{fig:sar_combined}
\end{figure}